\documentclass{article}

\usepackage{microtype}
\usepackage{graphicx}
\usepackage{subcaption}
\usepackage{booktabs} % for professional tables
\usepackage{xcolor} % For colors, useful for emphasis
\usepackage{algorithm} % For algorithm environment
\colorlet{DarkBlue}{blue!70!black}
\colorlet{DarkRed}{red!70!black}
\usepackage[table]{xcolor}
\usepackage[most]{tcolorbox}
\usepackage{lipsum}
\usepackage{multirow}
\usepackage{fancyhdr}       % header
\usepackage{paralist}
\usepackage{multirow}

\usepackage{hyperref}
\definecolor{lightblue}{RGB}{200,220,255}
\definecolor{redtext}{RGB}{255,0,0}

\usepackage[preprint]{icml2026}

\usepackage{amsmath}
\usepackage{amssymb}
\usepackage{mathtools}
\usepackage{amsthm}

\usepackage[capitalize,noabbrev]{cleveref}

\theoremstyle{plain}
\newtheorem{theorem}{Theorem}[section]

\newtheorem{lemma}[theorem]{Lemma}

\theoremstyle{definition}

\theoremstyle{remark}

\usepackage[textsize=tiny]{todonotes}

\begin{document}
\onecolumn
%\twocolumn[
  % \icmltitle{Reinforcement Finetuning Vision-Language-Action Models in Unified Multimodal World Models}
  \icmltitle{Towards Practical World Model-based Reinforcement Learning for Vision-Language-Action Models}

  % It is OKAY to include author information, even for blind submissions: the
  % style file will automatically remove it for you unless you've provided
  % the [accepted] option to the icml2026 package.

  % List of affiliations: The first argument should be a (short) identifier you
  % will use later to specify author affiliations Academic affiliations
  % should list Department, University, City, Region, Country Industry
  % affiliations should list Company, City, Region, Country

  % You can specify symbols, otherwise they are numbered in order. Ideally, you
  % should not use this facility. Affiliations will be numbered in order of
  % appearance and this is the preferred way.
  \icmlsetsymbol{equal}{*}
   \icmlsetsymbol{corres}{$\dagger$}

  \begin{icmlauthorlist}
    \icmlauthor{Zhilong Zhang}{n1,n2,equal}
    \icmlauthor{Haoxiang Ren}{n2,equal}
    \icmlauthor{Yihao Sun}{m1,m2,equal}
    \icmlauthor{Yifei Sheng}{n2}
    \icmlauthor{Haonan Wang}{n2}
    \icmlauthor{Haoxin Lin}{n1,n2}
    \icmlauthor{Zhichao Wu}{n1,n2}
    \icmlauthor{Pierre-Luc Bacon}{m1,m2}
    \icmlauthor{Yang Yu}{n1,n2,corres}

  \end{icmlauthorlist}

  \icmlaffiliation{n1}{National Key Laboratory for Novel Software Technology, Nanjing University, Nanjing, China}
  \icmlaffiliation{n2}{School of Artificial Intelligence, Nanjing University, Nanjing, China}
  \icmlaffiliation{m1}{Mila - Quebec AI Institute}
  \icmlaffiliation{m2}{Université de Montréal}

  \icmlcorrespondingauthor{Yang Yu}{yuy@nju.edu.cn}

  % You may provide any keywords that you find helpful for describing your
  % paper; these are used to populate the "keywords" metadata in the PDF but
  % will not be shown in the document
  \icmlkeywords{Machine Learning, ICML}

  \vskip 0.3in
%]

% this must go after the closing bracket ] following \twocolumn[ ...

% This command actually creates the footnote in the first column listing the
% affiliations and the copyright notice. The command takes one argument, which
% is text to display at the start of the footnote. The \icmlEqualContribution
% command is standard text for equal contribution. Remove it (just {}) if you
% do not need this facility.

% Use ONE of the following lines. DO NOT remove the command.
% If you have no special notice, KEEP empty braces:
\printAffiliationsAndNotice{}  % no special notice (required even if empty)
% Or, if applicable, use the standard equal contribution text:
% \printAffiliationsAndNotice{\icmlEqualContribution}

\begin{abstract}
Vision-Language-Action (VLA) models show strong generalization for robotic control, but finetuning them with reinforcement learning (RL) is constrained by the high cost and safety risks of real-world interaction. Training VLA models in interactive world models avoids these issues but introduces several challenges, including pixel-level world modeling, multi-view consistency, and compounding errors under sparse rewards. Building on recent advances across large multimodal models and model-based RL, we propose VLA-MBPO, a practical framework to tackle these problems in VLA finetuning. Our approach has three key design choices: (i) adapting unified multimodal models (UMMs) for data-efficient world modeling; (ii) an interleaved view decoding mechanism to enforce multi-view consistency; and (iii) chunk-level branched rollout to mitigate error compounding. Theoretical analysis and experiments across simulation and real-world tasks demonstrate that VLA-MBPO significantly improves policy performance and sample efficiency, underscoring its robustness and scalability for real-world robotic deployment.
\end{abstract}

\section{Introduction}

The emergence of Vision-Language-Action (VLA) models represents a significant advancement in robotic control, primarily attributable to their enhanced generalization capabilities~\citep{kim2024openvla,intelligence2025pi05visionlanguageactionmodelopenworld}. By integrating the expansive world knowledge and reasoning ability of Vision-Language Models (VLM) with extensive pretraining on vast robotics datasets, VLA models can effectively learn and execute a diverse range of tasks across varied robotic embodiments~\citep{deng2025graspvla,intelligence2025pi05visionlanguageactionmodelopenworld,bjorck2025gr00t}. 

\begin{figure*}[h!]
  \centering
  \includegraphics[width=0.99\linewidth]{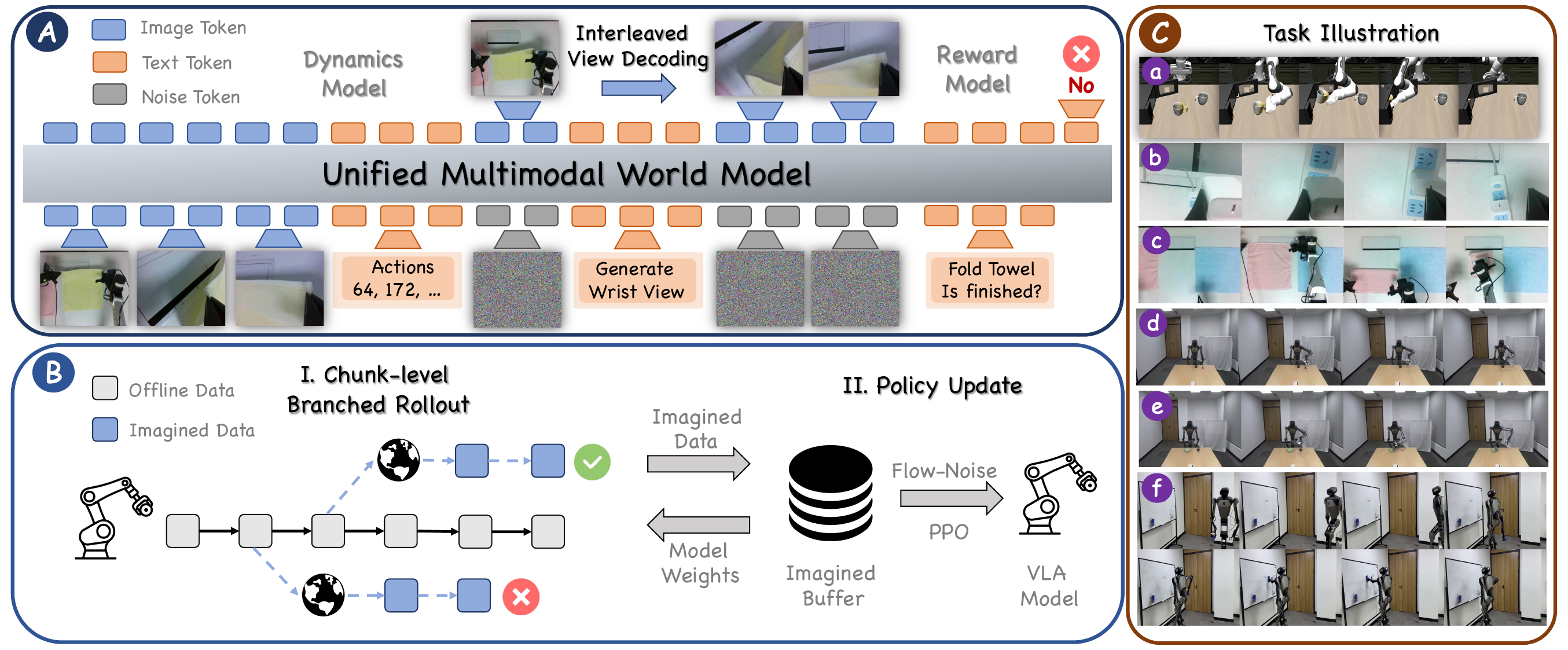}
  \caption{The framework illustration of VLA-MBPO, comprising (A) a UMM-based world model with interleaved view decoding for multi-view observation and reward prediction, and (B) a stable and scalable policy update algorithm with chunked-level branch rollout. (C) illustration of both simulated and real-world task designs in our work.}
  \label{fig:overview}
  % \vspace{-1mm}
\end{figure*}

Traditionally, finetuning VLA models typically relies on imitation learning (IL), particularly Behavior Cloning (BC), where the model learns a policy by minimizing the discrepancy between the model and expert actions~\citep{octo_2023,kim2024openvla,kim2025fine,intelligence2025pi05visionlanguageactionmodelopenworld}. However, BC is constrained by the limited generalizability and the high cost of collecting large amounts of expert demonstrations~\citep{xu2020error,rajaraman2020fundamental}. As another alternative for finetuning, reinforcement learning (RL) has been explored to enable policies that learn directly from environmental interactions, thereby achieving stronger generalization with fewer human demonstrations~\citep{chen2025conrft,liu2025can,zang2025rlinf,li2025simplevla}. Nevertheless, applying RL in real-world settings remains challenging, as it typically requires numerous interactions with the environment, making it prohibitively expensive, potentially unsafe, and hard to scale~\citep{zhang2025reinbot,xiao2025world}.

To overcome these issues, a promising approach is to use world models as simulated environments, allowing agents to learn entirely within an imagined space and avoiding the high costs and safety concerns of real-world interactions~\citep{li2025vla,zhu2025wmpo,fei2025srpo}. However, applying world models for VLA models faces several fundamental challenges. First, VLAs rely on visual inputs, requiring world models to generate pixel-level future frames and reward signals that generalize well from limited offline data. While some methods finetune pretrained video models as world models, these models suffer from low inference efficiency and cannot directly predict reward signals~\citep{xiao2025world,zhu2025wmpo}. Second, many VLA tasks, especially fine-grained manipulation, rely on multi-view visual inputs, which demands consistent and coherent predictions across views~\citep{guo2025ctrl,qian2025wristworld}. Furthermore, model-based learning is inherently susceptible to compounding model errors~\citep{janner2019trust,yu2020mopo}, and this issue is significantly amplified in VLA settings due to sparse reward signals, where small modeling inaccuracies can lead to opposite reward outcomes and severely misguide policy learning. Together, these challenges make it nontrivial to directly apply existing model-based RL methods to VLA finetuning.

In this work, we propose a practical world model-based RL framework that explicitly addresses these challenges. To enable sample-efficient and generalizable pixel-space world modeling, we use a \textit{pretrained unified multimodal model (UMM)} as the backbone of our world model~\citep{deng2025emerging,cui2025emu3,sun2025planning}, allowing joint prediction of visual dynamics and rewards without expensive video rollouts. To support consistent multi-view generation required for precise control, we introduce the \textit{interleaved view decoding} technique that enforces cross-view consistency while preserving view-specific details. Finally, to mitigate compounding model errors under sparse rewards, we employ \textit{chunk-level branched rollout} that limits error accumulation for policy optimization~\citep{park2025scalable}. Together, these components form a cohesive world model-based RL approach tailored for VLA finetuning (VLA-MBPO), enabling effective reinforcement learning with limited real-world interactions, as illustrated in Figure~\ref{fig:overview}. 

% To demonstrate the effectiveness of VLA-MBPO, we first provide a theoretical analysis showing the reduction in value gap achieved by our framework. Moreover, we conduct extensive experiments across both simulation and real-world manipulation tasks to empirically validate VLA-MBPO.
To validate VLA-MBPO, we provide a theoretical analysis of the reduced value gap alongside experiments in simulated and real-world tasks.
Overall, our results demonstrate that VLA-MBPO substantially improves policy generalization and data efficiency. Crucially, our approach yields this good performance using a universal set of hyperparameters, paving the way for more reliable and scalable deployment of VLA models in real-world robotic applications.

\section{Preliminaries}
\paragraph{Language-Conditioned Markov Decision Process.}
We formulate the language-conditioned robot control task as an infinite-horizon Language-conditioned Markov Decision Process (L-MDP) 
$\mathcal{M} = (\mathcal{S}, \mathcal{A}, \mathcal{L}, T, r, \gamma)$~\citep{mdp,puterman2014markov}. 
The state space $\mathcal{S}$ consists of visual observations, and $\mathcal{L}$ denotes the set of language instructions. The action space $\mathcal{A}$ consists of low-level control commands (e.g., joint positions or end-effector poses). The transition function $T: \mathcal{S} \times \mathcal{A} \rightarrow \Delta(\mathcal{S})$ specifies the distribution over the next state given the current state and the executed action. The reward function $r: \mathcal{S}\times\mathcal{L}\rightarrow \{0, 1\}$ assigns a binary reward based on the visual observation and the task instruction. The objective of L-MDP is to learn a language-conditioned policy $\pi(a_t | s_t, l)$ that maximizes the expected discounted return $V(\pi) = \mathbb{E}_{\pi}\!\left[\sum_{t=0}^{\infty} \gamma^t r(s_t,  l)\right]$ under any task instruction. Recently, most VLAs employ action chunking technique~\citep{fu2024mobile}, where the policy outputs a sequence of \( k \) consecutive actions  $\tilde{a}_t=(a_t, a_{t+1},\cdots,a_{t+k-1})$.

\textbf{Proximal Policy Optimization.}
We optimize the policy $\pi_\theta$ using Proximal Policy Optimization (PPO)~\citep{schulman2017proximal}. PPO aims to maximize a clipped surrogate objective $\mathcal{L}(\theta)$ to constrain policy updates and limit the divergence between $\pi_\theta$ and the behavior policy $\pi_{\theta_{\text{old}}}$ that collected the data. This objective function is computed by:
\begin{equation}
\mathcal{L}(\theta) = \mathbb{E}_{t}\!\left[ \min\!\left( \rho_t(\theta)\,\hat{A}_t,\; \mathrm{clip}\!\big(\rho_t(\theta), 1-\epsilon, 1+\epsilon\big)\,\hat{A}_t \right) \right]
\end{equation}
where $\rho_t(\theta) = \frac{\pi_\theta(a_t | s_t, l)}{\pi_{\theta_{\text{old}}}(a_t | s_t, l)}$ is the likelihood ratio. To estimate the advantage $\hat{A}_t$, PPO employ Generalized Advantage Estimation (GAE)~\citep{schulman2015high}:
\begin{equation}
\begin{split}
\hat{A}_t^{\text{GAE}(\gamma, \lambda)} =& \sum_{i=0}^{T-t} (\gamma\lambda)^i \mathcal{T}_{t}^V \\
\text{where}\;\; \mathcal{T}_t^V =& r(s_{t+1},l) + \gamma V_\phi(s_{t+1}, l) - V_\phi(s_t, l)
\end{split}
\end{equation}
Here $V_\phi(s,l)$ is the language-conditioned value function parameterized by $\phi$, $\gamma$ is the discount factor, and $\lambda$ controls the bias–variance trade-off in the advantage estimation.

\section{Addressing Practical Challenges in World Model-based RL for VLAs}
Applying model-based reinforcement learning (MBRL) for VLA models poses several challenges that arise from pixel-level modeling, multi-view generation, and compounding errors under sparse rewards. We organize this section around these challenges and describe how each design choice addresses a specific problem. We first focus on challenges in dynamics and reward modeling in the VLA domain (Section~\ref{sec:3.1}), and then tackle compounding errors exacerbated by sparse rewards during policy learning (Section~\ref{sec:3.2}). Together, these components form a cohesive and practical world model-based RL framework for VLA (Section~\ref{sec:3.3}).

\subsection{Challenges in World Modeling for VLAs}
\label{sec:3.1}
\begin{figure}[htbp]
  \centering
  \includegraphics[width=0.7\linewidth]{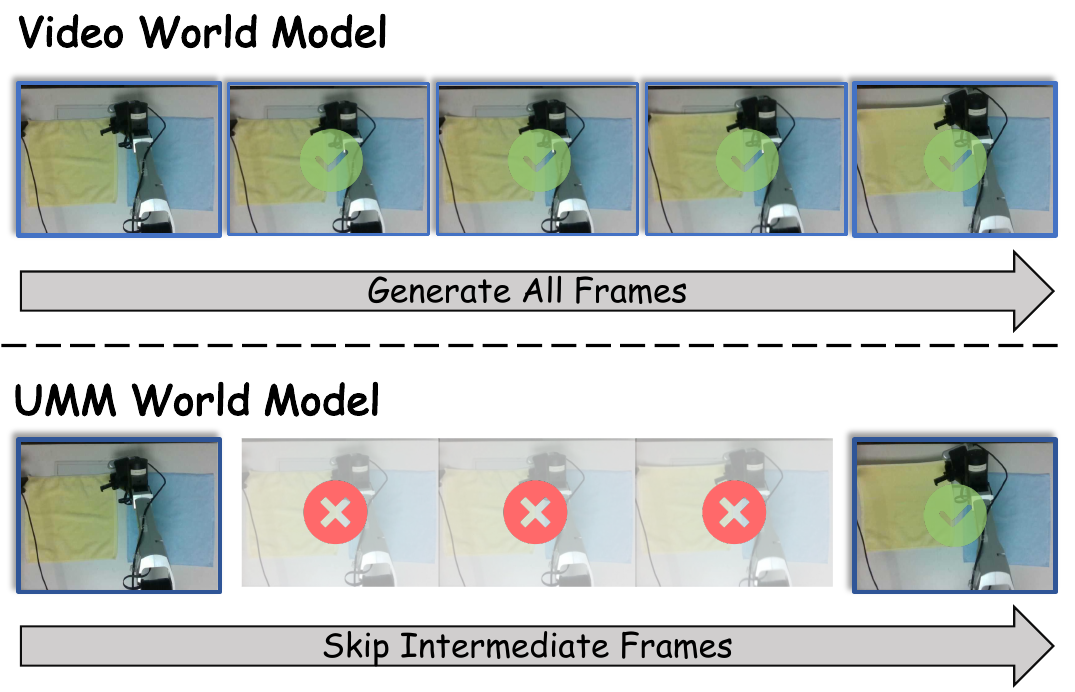}
  \caption{Frame-skipping scheme in UMM-based World Model.}
  \label{fig:umm_wm}
  \vspace{-2mm}
\end{figure}

Unlike conventional MBRL settings that rely on low-dimensional states or latent rollouts~\citep{janner2019trust,dreamerv3,tdmpc2}, world models for VLAs require (i) high-fidelity pixel-based generation since VLAs typically take origin images as inputs for visual perception, (ii) consistent multi-view generation for fine-grained control, and (iii) accurate semantic understanding ability for reward modeling. However, training such pixel-space world models from scratch is notoriously data-hungry and prone to overfitting in offline settings.

Prior work typically finetunes separate large video models and vision–language models for dynamics and reward modeling, respectively~\citep{zhu2025wmpo}. While effective, this two-model design introduces complexity and engineering overhead. In contrast, inspired by recent advances~\citep{deng2025emerging, sun2025planning}, we adopt unified multimodal models (UMMs) as a streamlined alternative, enabling joint prediction of future observations and rewards within a single model. Moreover, by directly modeling dynamics without generating intermediate frames, UMMs improve rollout efficiency compared to video world models (Figure~\ref{fig:umm_wm}). However, since UMMs are not originally tailored for VLA settings, additional adaptations are required.

\textbf{Extending UMMs to Action Spaces.} UMM-based world models typically operate over vision and language modalities~\citep{sun2025planning}, whereas VLA agents introduce an additional low-level action modality. To support UMMs in handling low-level action inputs, we follow \citet{goyal2025vla} to represent actions as integer tokens by discretizing continuous action values into a fixed range (e.g., [0,256]) and mapping them to the UMM vocabulary. The UMM is then tasked with generating the next observation and reward based on the text-based action chunks. This process is formalized by the conditional probability $s_{t+k} \sim T_\theta(\cdot| s_t, \tilde{a}_t)$ for the next observation, where \(\tilde{a}_t\) represents a \(k \times d\) token sequence, with \(k\) being the chunk size and \(d\) the number of action dimensions. Moreover, we define the chunk-level reward as $r_\theta(s_{t+k},l)=\sum_{i=1}^{k}\gamma^{i-1} r(s_{t+i},l)$, where $\gamma$ is the discount factor and $r(s_{t+i},l)$ denotes the reward at step $t+i$ under task instruction $l$. This design requires no architectural or vocabulary modification, preserving the pretrained capabilities of UMMs. In our implementation, we adopt Bagel~\citep{deng2025emerging} as the base model.

\textbf{Interleaved View Decoding.} Fine-grained manipulation often requires reasoning over multiple camera views, as a single viewpoint is insufficient to fully capture object geometry, occlusions, and contact dynamics. This introduces an additional challenge for world modeling: in addition to predicting visually plausible future observations, the world model must maintain cross-view consistency for downstream policy learning. Directly extending UMMs to multi-view inputs often results in view-specific artifacts, which can degrade control performance, even when local predictions are accurate. To address this challenge, we propose a \textit{interleaved view decoding} strategy that explicitly enforces consistency across multiple camera views. In most VLA models, the input consists of both head-view (or top-view) camera images \( s^h \) and wrist-camera images \( s^w \), forming the combined input \( s = [s^h, s^w] \). Here, the head-view captures global scene information, while the wrist-view provides fine-grained but partially observable details. To model this, we decompose the state transitions as:
\begin{equation}
\begin{aligned}
    &s_{t+k}^h \sim T_\theta(\cdot | s_t^h, s_t^w, a_{t:t+k-1})\\
    &s_{t+k}^w \sim T_\theta(\cdot | s_t^w, s_{t+k}^h)
\end{aligned}
\end{equation}
Empirically, we find that this way is better than independent generation of each view in Section~\ref{sec:world_eval}, which ensures that both global and fine-grained information are effectively integrated, maintaining consistency between views. This decomposition can be implemented easily with interleaved decoding in UMMs with attention matrix in Figure~\ref{fig:interleaved}.

\subsection{Challenges in Compounding Model Errors under Sparse Rewards}
\label{sec:3.2}
Compounding errors are a fundamental challenge in MBRL, as inaccuracies in world model predictions accumulate over long rollouts and can severely mislead policy optimization. In VLA settings, this issue is further exacerbated by sparse reward structures commonly encountered in manipulation tasks, where small prediction errors may result in qualitatively different outcomes and even opposite reward signals. Such error amplification renders naive full-horizon rollout strategies unreliable.

To mitigate this issue, we employ the \textit{chunk-level branched rollout}, a technique used in state-based simple tasks~\citep{park2025scalable} but has not been validated in pixel-based VLA finetuning. Instead of rolling out full-horizon trajectories starting from the initial state $s_0$, we initiate rollouts from any observation within the offline dataset with much smaller rollout horizon. Furthermore, since our world model operates at the chunk level, we can further reduce the effective rollout horizon by a factor of $1/k$, where $k$ is the chunk size. By combining these two strategies, we shorten the rollout length by a large margin, thereby enhancing the efficiency and stability of policy optimization.

\subsection{VLA-MBPO: Practical World Model-based RL Framework for VLA Models}
\label{sec:3.3}
We present \textbf{VLA-MBPO}, a practical world model-based RL framework for VLA models, which integrates the three components mentioned above to facilitate VLA reinforcement learning, as illustrated in Figure~\ref{fig:overview}. Our algorithm consists of three phases: 1) data collection with the VLA model; 2) world model finetuning with collected data; 3) policy optimization with RL in the world model. For policy optimization, we adopt \textit{Flow-Noise}~\citep{chen2025pi}, a simple variant of PPO for flow matching-based policy learning. During the RL process, we append an MLP-based value head to the VLA model for value prediction~\citep{chen2025pi}. Here, the advantage under $n$ branched rollout with chunk size $k$ is defined as:
\begin{equation}
\label{eq:chunk-adv}
\begin{aligned}
    &\hat{A}_t^{\text{GAE}(\gamma,\lambda)}=\sum_{i=0}^{n}(\gamma^k\lambda)^i\mathcal{T}_t^V, \text{where}\\
    &\mathcal{T}_{t}^{V}\!=\!\sum_{j=1}^{k}\gamma^{j-1}r(s_{t+j},l)\!+\!\gamma^kV_{\phi}(s_{t+k},l)-V_{\phi}(s_t,l)
\end{aligned}
\end{equation}
The Flow-Noise estimates the log likelihood term in the PPO policy loss as follows:
\begin{equation}
\vspace{-2mm}
\label{eq:flow-noise}
\begin{aligned}
    &\log \pi_{\phi}(\mathcal{A} \mid s_t)
    \\
    &=\log \left(\pi_{\phi}\left(\mathbf{A}^0 \mid s_t\right) \prod_{i=0}^{K-1} \pi_{\phi}\left(\mathbf{A}^{\tau_{i+1}} \mid \mathbf{A}^{\tau_i}, s_t\right)\right)
\end{aligned}
\end{equation}
where $\mathcal{A}=(\mathbf{A}^0,...,\mathbf{A}^1)$ is denoising sequence of an action chunk.
The complete algorithm is shown in Algorithm~\ref{algo}.

\begin{algorithm}[htbp]
   \caption{VLA-MBPO}
   \label{algo}
\begin{algorithmic}[1]
   \STATE \textbf{Require:} Initialized VLA Model $\{\pi_{\phi},V_{\phi}\}$, initialized World Model $\{T_{\theta},r_{\theta}\}$, initialized the replay buffer $\mathcal{D}$.
   % \FOR{$i=1$ {\bfseries to} $N_{\text{iter}}$}
    \STATE \textbf{Data collection:} Run $\pi_{\phi}$ in real environments to collect data and add them to $\mathcal{D}$.
    \STATE \textbf{World model fine-tuning:} Fine-tune $\{T_{\theta}, r_{\theta}\}$ on $\mathcal{D}$.
    \STATE \textbf{Policy optimization in WM:}
        \FOR{$j=1$ {\bfseries to} $N_{\text{RL\_iter}}$}
            \STATE Sample starting states $\{s_t\}^M$ from buffer $\mathcal{D}$.
            \STATE Generate chunk-level branched rollouts by $T_{\theta},r_\theta$.
            \STATE Run \textit{Flow-Noise} to update $\{\pi_{\phi},V_{\phi}\}$ on these synthetic data.
        \ENDFOR
   % \ENDFOR
\end{algorithmic}
\end{algorithm}

Our method can be viewed as an instance of offline model-based reinforcement learning (MBRL), but it differs from prior offline MBRL approaches in several key aspects.
First, unlike traditional methods using conservative regularization to mitigate model bias \citep{yu2020mopo, pmlr-v202-sun23q, anystep}, our approach omits such mechanisms as the finetuned UMM-World achieves sufficient accuracy to render them unnecessary.
% First, unlike many traditional offline MBRL methods that explicitly incorporate conservative regularization to mitigate model bias~\citep{yu2020mopo,pmlr-v202-sun23q,anystep}, our approach does not employ any such conservative mechanism. This design choice is motivated by the observation that the UMM-World, after task-specific finetuning, can achieve sufficiently high predictive accuracy, which substantially reduces model bias and makes conservative regularization unnecessary in practice.
Second, unlike recent action-chunking-based offline MBRL method~\citep{park2025scalable}, our method is built upon PPO framework and therefore does not rely on additional designs like rejection sampling and Q models, significantly reducing the system complexity.
Driven by these two advantages, \textbf{our method maintains a single set of hyperparameters across all tasks} (Table~\ref{tab:rl_hyper}), which enhances its practical utility and simplifies deployment in real-world scenarios.
\section{Theoretical Analysis}
In this section, we first analyze how previous world model-based RL methods suffer from severe compounding model error in Section~\ref{sec:4.1}. Then we show a theoretical result of how VLA-MBPO mitigates these errors in Section~\ref{sec:4.2}.

\subsection{Value Gap of World Model-based RL}
Previous world model-based RL for VLA methods evaluate the chunk-level policy in the step-level world model with full-horizon rollouts~\citep{xiao2025world,zhu2025wmpo}.

\label{sec:4.1}
\begin{theorem}[Value Gap of Chunk-level Policy, Step-level World Model]
\label{the:wmpo}
Given a chunk-level policy $\pi^k$ with a chunk size $k$, and a step-level world model $\hat{T}$. Assume that $\epsilon_\pi^k = \max_s D_{\mathrm{TV}}\left(\pi_\mathcal{D}^k(\tilde{a}|s) \|\pi^k(\tilde{a}|s)\right)$ and $\epsilon_m=\max_t\mathbb{E}_{s\sim\mathcal{D}^{t}} \left[ D_{\mathrm{TV}}\left(T(s^\prime | s, a) \| \hat{T}(s^\prime | s, a)\right) \right]$.
 Then, the value gap of policy \(\pi^k\) under the true environment and the learned world model is bounded by:
\begin{equation}
\begin{aligned}
&|V(\pi^k) - \hat{V}(\pi^k)| \\ \leq& \frac{2r_{\max}}{1-\gamma} \left[ \frac{2\gamma^k }{1 - \gamma^k}\epsilon_\pi^k + 2\epsilon_\pi^k + \frac{k\gamma^k}{1 - \gamma^k}\epsilon_m \right]  
\end{aligned}  
\end{equation}
\end{theorem}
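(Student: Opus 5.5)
The bound has the shape of the MBPO branched-rollout lemma of \citet{janner2019trust}, lifted to the chunk-level MDP. The plan is to reduce everything to a chunk-level MDP and apply a simulation-lemma argument there. Define the chunk-level MDP with action $\tilde a$, discount $\gamma^k$, and chunk reward $r^k(s,\tilde a)=\sum_{i=1}^k\gamma^{i-1}r(s_{t+i},l)$, so that $|r^k|\le \frac{1-\gamma^k}{1-\gamma}r_{\max}\le \frac{r_{\max}}{1-\gamma}$. The true chunk transition $T^k(s_{t+k}\mid s_t,\tilde a_t)$ is the $k$-fold composition of $T$ along the chunk. The model transition $\hat T^k$ is the same composition of $\hat T$, since the step-level world model is unrolled $k$ times to execute one chunk. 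Values of $\pi^k$ in the original and chunk MDPs coincide, so it suffices to bound the gap in the chunk MDP.

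The first step is the standard triangle inequality through the data-collecting policy $\pi_\mathcal{D}^k$:
\[
|V(\pi^k)-\hat V(\pi^k)|\le |V(\pi^k)-V(\pi^k_\mathcal{D})|+|V(\pi^k_\mathcal{D})-\hat V(\pi^k_\mathcal{D})|+|\hat V(\pi^k_\mathcal{D})-\hat V(\pi^k)|.
\]

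The second step bounds the two policy-shift terms, using the return-difference lemma (Janner et al., Lemma B.3) in the chunk MDP with discount $\gamma^k$. In the form $|\eta_1-\eta_2|\le 2R\big[\tfrac{\gamma^k(\epsilon_m'+2\epsilon_\pi)}{1-\gamma^k}+2\epsilon_\pi\big]$, it gives, when the dynamics are shared, a gap of at most $2\frac{r_{\max}}{1-\gamma}\big[\tfrac{2\gamma^k\epsilon_\pi^k}{1-\gamma^k}+2\epsilon_\pi^k\big]$. This uses the per-state TV bound $\epsilon_\pi^k$ together with the accumulation $D_{\mathrm{TV}}(d_t^1\|d_t^2)\le t\,\epsilon_\pi^k$ over chunk steps. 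I would split the policy contribution evenly between the first and third terms, or equivalently apply the lemma once with both dynamics and policy differing. This recovers exactly the terms $\frac{2\gamma^k}{1-\gamma^k}\epsilon_\pi^k+2\epsilon_\pi^k$ inside the bracket.

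The third step bounds the model term on the data policy, where on-distribution error is measured. For one chunk step, a telescoping/hybrid argument gives
\[
D_{\mathrm{TV}}\big(T^k(\cdot\mid s,\tilde a)\,\|\,\hat T^k(\cdot\mid s,\tilde a)\big)\le \sum_{i=0}^{k-1}\mathbb{E}_{s_{t+i}}D_{\mathrm{TV}}(T\|\hat T).
\]
Under the data distribution at each intermediate step this is at most $k\epsilon_m$, which is where $\max_t$ over $\mathcal{D}^t$ in the definition of $\epsilon_m$ is used. Plugging $\epsilon_m'=k\epsilon_m$ into the lemma with $\epsilon_\pi=0$ gives $2\frac{r_{\max}}{1-\gamma}\frac{\gamma^k}{1-\gamma^k}\cdot k\epsilon_m$. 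Summing the pieces gives the stated bound, after absorbing constants into the factor $2$ in front of the bracket.

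I expect the main obstacle to be the third step. Intermediate states inside a chunk under $\hat T$ drift from the data distribution $\mathcal{D}^{t+i}$, so the per-step expectation is not literally taken under $\mathcal{D}$. I would handle this with the hybrid decomposition in which the first $i$ steps use the true $T$ and the remaining steps use $\hat T$. Each summand then compares $T$ and $\hat T$ at a state drawn from the true dynamics under $\pi_\mathcal{D}$, which is exactly $\mathcal{D}^{t+i}$, and data-processing handles the $\hat T$ continuation. Careful bookkeeping of the constant $2$ between the pieces will also be needed to match the stated coefficients.
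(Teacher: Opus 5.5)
Your three-term decomposition is sound and, done correctly, gives exactly the stated constants. The accounting in your second step is wrong as written, though. The form you quote, $2R\bigl[\frac{\gamma^k(\epsilon_m'+2\epsilon_\pi)}{1-\gamma^k}+2\epsilon_\pi\bigr]$, is the shape of the theorem itself with $\epsilon_m'=k\epsilon_m$ (and of MBPO's final bound). It therefore already accounts for two policy shifts; it is not the bound for a single shift. If each of your first and third terms really cost $\frac{2r_{\max}}{1-\gamma}\bigl[\frac{2\gamma^k}{1-\gamma^k}\epsilon_\pi^k+2\epsilon_\pi^k\bigr]$, your total would be twice the stated policy coefficient. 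Neither ``splitting evenly'' nor ``absorbing constants into the factor $2$'' is a legitimate step, because the target has no slack.

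What you need is the single-shift bound, which is Lemma~\ref{lemma:policy_error}: $|V(\pi_1^k)-V(\pi_2^k)|\le\frac{2r_{\max}}{1-\gamma}\bigl[\frac{\gamma^k}{1-\gamma^k}(\epsilon_\pi^k+\epsilon_m^k)+\epsilon_\pi^k\bigr]$. It comes from a state-marginal drift of at most $t(\epsilon_\pi^k+\epsilon_m^k)$ after $t$ chunks, plus one $\epsilon_\pi^k$ for the current chunk action. With it, the three terms are:
\begin{itemize}
\item your first and third terms each contribute $\frac{2r_{\max}}{1-\gamma}\bigl[\frac{\gamma^k}{1-\gamma^k}\epsilon_\pi^k+\epsilon_\pi^k\bigr]$;
\item your middle term contributes $\frac{2r_{\max}}{1-\gamma}\cdot\frac{k\gamma^k}{1-\gamma^k}\epsilon_m$.
\end{itemize}
These sum to the statement with nothing to absorb. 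Also use the tight chunk-reward bound $r_{\max}\frac{1-\gamma^k}{1-\gamma}$, since that is what turns $\tilde r_{\max}/(1-\gamma^k)$ into $r_{\max}/(1-\gamma)$. The looser $r_{\max}/(1-\gamma)$ you also mention would cost an extra factor $1/(1-\gamma^k)$.

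Once fixed, your route differs from the paper's only in how the pieces are grouped. The paper uses two terms, $|V(\pi^k)-V(\pi_\mathcal{D}^k)|+|V(\pi_\mathcal{D}^k)-\hat V(\pi^k)|$. It bounds the second by one application of Lemma~\ref{lemma:policy_error} with policy and dynamics changing at once, which is the alternative you mention in passing. This merges your second and third terms, gives identical constants, and saves one lemma call.

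The paper simply asserts that the chunk-level divergence of the $k$-fold composed step model is at most $k\epsilon_m$. Your hybrid argument (true dynamics for the first $i$ steps, model afterwards, data processing for the shared tail) is what actually justifies this, and it is exactly where the $\max_t$ over $\mathcal{D}^t$ is used.

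Your split has a further small advantage: the model error is compared under $\pi_\mathcal{D}^k$ on both sides, so it is literally an on-data quantity. In the paper's merged term, by contrast, you must write $T_1^k\pi_1^k-T_2^k\pi_2^k=T_2^k(\pi_1^k-\pi_2^k)+\pi_1^k(T_1^k-T_2^k)$ to weight the model error by data actions. The paper's proof uses the other ordering, which weights it by the actions of $\pi^k$.
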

\begin{proof}
    See Appendix~\ref{app:theore}.
\end{proof}

As shown in Theorem~\ref{sec:4.1}, the value gap is influenced by two components:
\begin{inparaenum}
\item[(a)] \textbf{Policy error} $\epsilon_\pi^k$: the divergence between the current policy $\pi^k$ and the behavior policy $\pi_\mathcal{D}^k$ , with dependency $\mathcal{O}\left(\frac{\gamma^k}{(1-\gamma)(1-\gamma^k)}\right)$. 
\item[(b)] \textbf{Model error} $\epsilon_m$: the discrepancy between true dynamics $T$ and learned model $\hat{T}$, which also significantly scales as $\mathcal{O}\left(\frac{k\gamma^k}{(1-\gamma)(1-\gamma^k)}\right)$.
\end{inparaenum}

The value gap grows quadratically with the task horizon due to the compounding errors, resulting in big value errors in long-horizon tasks. For large VLA models, we typically control policy divergence $\epsilon_\pi^k$ by reducing the update step size or adding a KL constraint. However, world model error $\epsilon_m$ is more challenging to control during RL training, as it is largely dependent on the high-quality pretraining. Consequently, even small model inaccuracies can accumulate over long rollouts, leading to significant distortions in value estimates and misleading policy optimization. 

\subsection{Reduced Value Gap of VLA-MBPO}
Here, we demonstrate how VLA-MBPO reduces compounding errors by employing a chunk-level branched rollouts in a chunk-wise world model.
\label{sec:4.2}
\begin{theorem}[Value Gap of Chunk-level Policy, \textcolor{purple}{Chunk-level \!World \!Model, \!Branched \!Rollouts}]
\label{the:branched_rollout}
Given an offline dataset $\mathcal{D}$ and a chunk-level policy $\pi^k$ with a chunk size $k$. Suppose that policy $\pi^k$ is evaluated via $n$-chunks branched rollouts in the chunk-level world model $\hat{T}^k(s^\prime|s,\tilde{a})$. Assume that $\epsilon_\pi^k = \max_{s} D_{\mathrm{TV}}\!\left( \pi_\mathcal{D}^k(\tilde{a} | s) \,\|\, \pi^k(\tilde{a} | s) \right)$ and $\epsilon_{m}^{k,n} =\max_{t\leq n}\mathbb{E}_{s\sim d^{\pi^k}_{t,s_0\sim\mathcal{D}}}\left[D_\mathrm{TV}\left(T^{k}(s^\prime|s,\tilde{a})\|\hat{T}^{k}(s^\prime|s,\tilde{a})\right)\right]$. Then the value gap of policy \(\pi^k\) under the true environment and the learned world model is bounded by:
\begin{equation}
\begin{aligned}
&\left| V(\pi^k) - \hat{V}^\text{branch}(\pi^k) \right| \\
&\leq  \frac{2r_{\max}}{1-\gamma}\left[\frac{(\gamma^k)^{n+1}}{1-\gamma^k}\epsilon_\pi^k+(\gamma^k)^n \epsilon_\pi^k+n\epsilon_{m}^{k,n}\right]
\end{aligned}
\end{equation}
\end{theorem}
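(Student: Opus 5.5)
We follow the branched-rollout analysis of MBPO \citep{janner2019trust}, lifted to the chunk-level MDP. Viewing each action chunk $\tilde{a}$ as a single macro-action, the problem becomes an MDP with per-macro-step discount $\gamma^k$, transition $T^k(s'|s,\tilde a)$, and chunk-level reward bounded by $\sum_{j=1}^k\gamma^{j-1}r_{\max}\le r_{\max}\frac{1-\gamma^k}{1-\gamma}$. The value bound below is stated with the prefactor $\frac{2r_{\max}}{1-\gamma}$ absorbing this per-chunk reward scale, so that macro-step discounting yields the factors $(\gamma^k)^n$ and $\frac{1}{1-\gamma^k}$.

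The branched estimate $\hat V^{\text{branch}}(\pi^k)$ has three ingredients. Starting states are drawn from the data distribution, which is the state distribution of the behavior policy $\pi_\mathcal{D}^k$ under the true $T^k$. From each such state, $n$ chunks are rolled out with $\pi^k$ in $\hat T^k$. Beyond the branch, the continuation is effectively that of the data-generating process.

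We would compare the true value and the branched value through an intermediate hybrid process, and bound the two gaps separately.

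\textbf{Hybrid process.} Use $\pi_\mathcal{D}^k$ under the true $T^k$ to reach the branch point, then $\pi^k$ under the true $T^k$ for $n$ chunks, with the same continuation as the branched scheme.

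\textbf{Model gap.} Compare the hybrid with the branched rollout. The two differ only in the dynamics during the $n$ branched chunks. A standard telescoping or simulation-lemma argument on the joint distribution over the first $n$ macro-steps shows that the TV distance of the state marginal at step $t\le n$ is at most $\sum_{i<t}\mathbb{E}_{s\sim d^{\pi^k}_{i,s_0\sim\mathcal{D}}}D_{\mathrm{TV}}(T^k\|\hat T^k)\le t\,\epsilon_m^{k,n}$. We would then bound the accumulated discounted chunk rewards by $\frac{2r_{\max}}{1-\gamma}$ times the worst-case marginal discrepancy, at most $n\epsilon_m^{k,n}$. This choice gives the $n\epsilon_m^{k,n}$ term without a $\gamma$ weight: we cap $\sum_t(\gamma^k)^t\cdot t\epsilon$ by a flat bound rather than summing, accepting some looseness. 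Because $\epsilon_m^{k,n}$ is defined as an expectation under $d^{\pi^k}_{t,s_0\sim\mathcal{D}}$, exactly the branch-start distribution propagated by $\pi^k$, no extra distribution-shift term appears here.

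\textbf{Policy gap.} Compare the true $V(\pi^k)$ with the hybrid. The processes differ in which policy generates the start states and in which policy acts outside the $n$-chunk window. The pre-branch mismatch contributes at most $2r_{\max}/(1-\gamma)\cdot(\gamma^k)^n\epsilon_\pi^k$ after reweighting by the discount at the branching offset. The post-branch tail contributes a geometric sum $\sum_{t>n}(\gamma^k)^t\epsilon_\pi^k=\frac{(\gamma^k)^{n+1}}{1-\gamma^k}\epsilon_\pi^k$. Each piece uses the lemma that the TV distance between state marginals induced by two policies with per-state TV at most $\epsilon_\pi^k$ grows at most linearly per macro-step, with the factor of $2$ absorbed as in Theorem~\ref{the:wmpo}. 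Adding the policy gap and the model gap via the triangle inequality gives the stated bound.

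\textbf{Main obstacle.} The hardest step is pinning down what $\hat V^{\text{branch}}$ measures, in particular how the start states from $\mathcal{D}$ and the post-branch tail are accounted for. The $(\gamma^k)^n$ weighting on the policy terms depends on placing the branch point consistently with the discounting. We would first try to mirror exactly the hybrid decomposition used in the appendix proof of Theorem~\ref{the:wmpo}, replacing the step-level model error $k\epsilon_m$ per chunk with a single chunk-level $\epsilon_m^{k,n}$ and truncating the model-error accumulation at $n$ chunks. Then we would check that the geometric sums produce exactly the $(\gamma^k)^{n+1}/(1-\gamma^k)$ and $(\gamma^k)^n$ coefficients.
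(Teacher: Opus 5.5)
Your skeleton matches the paper's. You use the same intermediate value $\hat V^{\text{branch}}(\pi_\mathcal{D}^k,\pi^k)$: run $\pi_\mathcal{D}^k$ in the true $T^k$ up to the branch point, then $\pi^k$ in the true $T^k$ for $n$ chunks. You split with the triangle inequality into a policy-only term and a model-only term, as the paper does. Your model-gap argument is exactly what the paper obtains from its pre/post branched-rollout lemma with only $\epsilon_m^{k,\text{post}}$ nonzero: the marginal gap is $t\,\epsilon_m^{k,n}$ for $t\le n$ (which is why $\epsilon_m^{k,n}$ is taken under $d^{\pi^k}_{t,s_0\sim\mathcal{D}}$), capped flat at $n\epsilon_m^{k,n}$. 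The paper treats the policy term the same way, via that lemma with only $\epsilon_\pi^{k,\text{pre}}\le\epsilon_\pi^k$ nonzero. You instead try to derive it directly, and that step does not go through as you describe it.

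The problem is where the $(\gamma^k)^n$ weights come from. In your forward-time picture (data start states, then $n$ chunks of $\pi^k$, then a data-generating continuation), the pre-branch segment \emph{precedes} the branched chunks. A mismatch there is felt from the earliest macro-steps, and nothing discounts it by $(\gamma^k)^n$. Worse, if $\hat V^{\text{branch}}$ is read as the return from start states drawn from $\mathcal{D}$, the gap is nonzero even for $\pi^k=\pi_\mathcal{D}^k$ and $\hat T^k=T^k$, where the right-hand side vanishes.

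The paper's lemma uses the MBPO convention instead. The marginal at macro-step $t$ arises from $\max(t-n,0)$ pre-branch transitions followed by $\min(t,n)$ branch transitions. Hence in the policy term only $t>n$ contributes, with $D_{\mathrm{TV}}(d_t^{\pi_1^k}(s,\tilde a)\,\|\,d_t^{\pi_2^k}(s,\tilde a))\le (t-n)\epsilon_\pi^k+\epsilon_\pi^k$. Both policy terms come from this single pre-branch error, and there is no separate post-branch tail:
\begin{itemize}
\item the accumulated drift gives $(1-\gamma^k)\sum_{t>n}(\gamma^k)^t(t-n)\epsilon_\pi^k=\frac{(\gamma^k)^{n+1}}{1-\gamma^k}\epsilon_\pi^k$;
\item the per-step action term gives $(1-\gamma^k)\sum_{t>n}(\gamma^k)^t\epsilon_\pi^k=(\gamma^k)^{n+1}\epsilon_\pi^k\le(\gamma^k)^n\epsilon_\pi^k$.
\end{itemize}

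Relatedly, your ``geometric sum of a constant $\epsilon_\pi^k$'' for the tail is inconsistent with your own normalization. Your $\frac{2r_{\max}}{1-\gamma}$ prefactor and your flat model-gap cap both presuppose the $(1-\gamma^k)$-weighted occupancy. Under that weighting, a constant per-step gap yields only $(\gamma^k)^{n+1}\epsilon_\pi^k$; the factor $\frac{1}{1-\gamma^k}$ actually comes from the linear growth $(t-n)\epsilon_\pi^k$, which you cite but do not use. Your final coefficients are correct. To justify them, define $\hat V^{\text{branch}}$ by this convention (equivalently, prove the pre/post lemma and zero out errors) rather than by the forward-time reading.
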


\begin{proof}
    See Appendix~\ref{app:theore}.
\end{proof}

Theorem~\ref{the:branched_rollout} highlights how VLA-MBPO limits the value gap between the true environment and the learned world model via chunk-level world models and branched rollouts. By restricting the length of the rollouts and breaking the task into chunks, both the policy and model errors are mitigated compared to previous methods. Specifically, the policy error dependency is bounded by $\mathcal{O}\left(\frac{(\gamma^k)^{n+1}}{(1-\gamma)(1-\gamma^k)}\right)$ while the model error dependency is bounded by $\mathcal{O}\left(\frac{n}{1-\gamma}\right)$. We observe that as $n$ increases, while the dependency on model error increases linearly, the influence of policy error diminishes exponentially. This suggests that by appropriately selecting the branched-rollout chunks $n$, we can significantly alleviate this value gap.

\textbf{Simple Case Study.} Consider a scenario where the discount factor \(\gamma = 0.99\) and the chunk size \(k = 10\), which are typical choices in VLA-RL tasks. In such cases, previous world model-based VLA-RL methods suffer from a value gap of approximately \(4183\epsilon_\pi^k + 18916\epsilon_m\). In contrast, if we use VLA-MBPO with branched rollouts of length \(n = 2\), the value gap is considerably smaller, approximately \(1710\epsilon_\pi^k + 400\epsilon_m^{k,n}\), demonstrating the effectiveness of the VLA-MBPO in mitigating error compounding.

\section{Experiments}
We conduct extensive experiments on both simulated and real-world robotic tasks to answer these questions. \textbf{Q1}: How does our UMM-based world model (UMM-World) perform on multi-view dynamics modeling and reward prediction? (Section~\ref{sec:world_eval}) \textbf{Q2}: How does VLA-MBPO perform in simulation tasks? (Section~\ref{sec:exp_sim}) \textbf{Q3}: How does VLA-MBPO perform in real-world tasks? (Section~\ref{sec:exp_real}) \textbf{Q4}: How sensitive is VLA-MBPO to key hyperparameters and design choices? (Section~\ref{sec:exp_ablation})

\subsection{World Model Evaluation}
\label{sec:world_eval}
\begin{table*}[t]
\centering
\caption{Performance comparison of various world models across multiple metrics, including dynamics and reward model evaluations.}
\begin{tabular}{lcccccccccc}
\toprule
\multirow{3}{*}{\textbf{Model}} 
& \multicolumn{7}{c}{\textbf{Dynamics Model}} 
& \multicolumn{2}{c}{\textbf{Reward Model}} \\
\cmidrule(lr){2-8}
\cmidrule(lr){9-10}
& \multicolumn{3}{c}{\textbf{Head View}} 
& \multicolumn{3}{c}{\textbf{Wrist View}} 
& \multirow{2}{*}{\textbf{Inf. Time$\downarrow$}} 
& \multirow{2}{*}{\textbf{ACC$\uparrow$}}
& \multirow{2}{*}{\textbf{F1$\uparrow$}} \\
\cmidrule(lr){2-4}
\cmidrule(lr){5-7}
& \textbf{LPIPS$\downarrow$} 
& \textbf{PSNR$\uparrow$} 
& \textbf{SSIM$\uparrow$} 
& \textbf{LPIPS$\downarrow$} 
& \textbf{PSNR$\uparrow$} 
& \textbf{SSIM$\uparrow$} 
& 
& 
\\
\midrule
Ctrl-World
& 0.150 & 21.95 & 0.882 
& 0.435 & 13.87 & 0.680
& 21 
& -- 
& -- \\
Qwen3-VL-8B
& -- & -- & -- 
& -- & -- & -- 
& -- 
& 97.0 
& 0.841 \\
\textbf{UMM-World} 
& \textbf{0.094} & \textbf{23.29} & \textbf{0.906}
& \textbf{0.254} & \textbf{18.76} & \textbf{0.751}
& 10
& 98.4
& \textbf{0.861} \\
- \textit{w/o IVD}     
& 0.116 & 21.71 &  0.895
& 0.454 & 13.38 &  0.559
& \textbf{8}
& \textbf{98.5}
& 0.799 \\
- \textit{w/o PT}     
& 0.281 & 19.26 &  0.756
& 0.579 & 12.80 &  0.499
& 10
& 94.5
& 0.496 \\
\bottomrule
\end{tabular}
% \vspace{-1mm}
\label{tab:world_model_results}
\end{table*}

\textbf{Benchmark.} We conduct our evaluation on the \textit{Object} task suite in LIBERO~\citep{liu2023libero}, a manipulation benchmark that contains 10 distinct tasks with diverse object instances. We use a dataset comprised of 50 trajectories per task for training, and report evaluation results on a held-out test set of 10 trajectories per task. The evaluation protocol involves rolling out 40 steps across 100 held-out test trajectories. We report evaluation results for both head and wrist views to rigorously assess long-term consistency.

\textbf{Baselines.} We quantitatively compare our model against two distinct baselines: (1) \textit{Ctrl-World} \citep{guo2025ctrl}, a video generation model that excels at dynamics synthesis with multi-view consistency but lacks the semantic grounding for intrinsic reward prediction; and (2) \textit{Qwen3-VL} \citep{yang2025qwen3}, a VLM capable of precise reward reasoning but unable to predict visual dynamics. Moreover, to validate our model design, we evaluate two ablations: 1) \textit{w/o Interleaved View Decoding (IVD)}, where views are generated in parallel rather than interleaved, isolating the impact of our decoding strategy on multi-view consistency. 2) \textit{w/o Pretrained (PT)}, where UMM-World is randomly initialized. We quantitatively evaluate all models from dynamics prediction, inference speed and reward prediction perspectives. 

% \textbf{Evaluation Setup.} We quantitatively evaluate the quality of dynamics generation using pixel-level metrics (PSNR \citep{hore2010image} and SSIM \citep{wang2004image}) and the perceptual metric LPIPS \citep{zhang2018unreasonable}. Additionally, we report the average inference time to assess the computational efficiency. For semantic accuracy, we measure the F1-score of predicted rewards against ground truth labels on LIBERO-Object. The evaluation protocol involves rolling out 40 steps across 100 held-out test trajectories. We report evaluation results for both head and wrist views to rigorously assess long-term consistency.

\textbf{Task Results.} As presented in Table \ref{tab:world_model_results}, UMM-World demonstrates superior performance across nearly all metrics. Not only does it outperforms the video world model baseline on the prediction fidelity of both head and wrist views, but also achieves a 2$\times$ faster inference speed owing to the frame-skipping scheme. When it comes to reward prediction, our model also matches the specialist Qwen3-VL-8B. We credit these substantial gains to two key innovations: \textit{Interleaved View Decoding} and \textit{Unified Pretraining}. Ablation studies confirm that removing either component leads to a severe degradation in both dynamics modeling and reward prediction performance. We also present some qualitative results of UMM-World in Figure~\ref{fig:world_model} and Figure~\ref{fig:world_model_all_tasks}.

\subsection{Simulation Task Experiments}
\label{sec:exp_sim}

% In this section, we validate the effectiveness of VLA-MBPO in simulated manipulation tasks.

\textbf{Benchmark.} We evaluate VLA-MBPO on LIBERO \citep{liu2023libero}, a widely adopted benchmark consisting of four task suites: \textit{Spatial}, \textit{Object}, \textit{Goal}, and \textit{Long}, designed to assess capabilities ranging from visual grounding to long-horizon planning. The offline dataset is constructed by collecting 50 episodes per task, where the behavior policy is obtained via $\pi_{0.5}$ with one-shot SFT. We measure performance by average success rate over 50 evaluation episodes per task across all 10 tasks in each suite.

\textbf{Baselines.} We compare VLA-MBPO against four baselines: (1)\textit{$\pi_{0.5}$} (SFT): the VLA model ($\pi_{0.5}$) prior to any RL; (2) \textit{BC} (WM): a BC baseline trained on successful trajectories generated by world models; (3) an online RL baseline \textit{$\pi_{RL}$} \citep{chen2025pi} under an equivalent real-world interaction budget; and (4) \textit{IDQL} \citep{hansen2023idql}: an offline model-free RL algorithm for flow-matching policies. 

\begin{table}[!htbp]
\centering
\vspace{-2mm}
\caption{Performance comparison on LIBERO benchmark.}
%\resizebox{\linewidth}{!}{
\begin{tabular}{lccccc}
\toprule
\multirow{2}{*}{\textbf{Model}} & \multicolumn{5}{c}{\textbf{LIBERO}} \\
\cmidrule(lr){2-6}
& \textbf{Spatial} & \textbf{Object} & \textbf{Goal} & \textbf{Long} & \textbf{Avg} \\
\midrule
\multicolumn{6}{c}{One-Trajectory SFT} \\
\midrule
$\pi_{0.5}$ (SFT)      & 78.2 & 88.6 &  85.8 & 54.6 & 76.8\\
BC (WM)      & 80.6 & 89.8 & 85.0 & 48.6 & 76.0 \\
$\pi_{RL}$      & 86.0 & 92.4 & 90.8 & 61.2 & 82.6 \\
IDQL      & 79.0 & 92.4 & 86.4 & 52.2 & 77.5 \\
\textbf{VLA-MBPO}   & \textbf{87.8}  & \textbf{96.6} & \textbf{92.8} &  \textbf{66.8} & \textbf{85.9} \\
\rowcolor{lightblue}
$\Delta$         & \textcolor{redtext}{+9.6} & \textcolor{redtext}{+8.0} & \textcolor{red}{+6.8} & \textcolor{red}{+12.2} & \textcolor{red}{+9.1} \\
\bottomrule
\end{tabular}
%}
\label{tab:libero_results}
\vspace{-1mm}
\end{table}

\textbf{Task Results.} As shown in Table~\ref{tab:libero_results}, VLA-MBPO achieves consistent and substantial performance gains over all baselines across every suite of the LIBERO benchmark. It attains the highest average success rate, improving the initial SFT policy by +9.1. Notably, the most pronounced improvement occurs on the \textit{LIBERO-Long} suite, which comprises the most challenging long-horizon tasks. Under an identical real-world data budget, VLA-MBPO substantially outperforms both offline and online RL methods on this suite, highlighting its superior sample efficiency and effectiveness in solving complex manipulation tasks.

\begin{figure}[!htbp]
  \centering
  \includegraphics[width=0.7\linewidth]{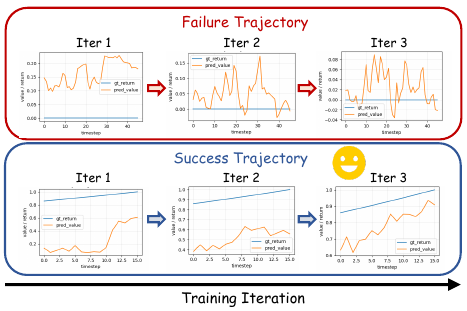}
  \caption{The learning dynamics of value models. The blue lines represent the ground-truth return, while the orange lines represent the return estimated by the value model.}
  % \vspace{-1mm}
  \label{fig:values}
\end{figure}

\begin{figure*}[!htbp]
  \centering
  \includegraphics[width=1.0\linewidth]{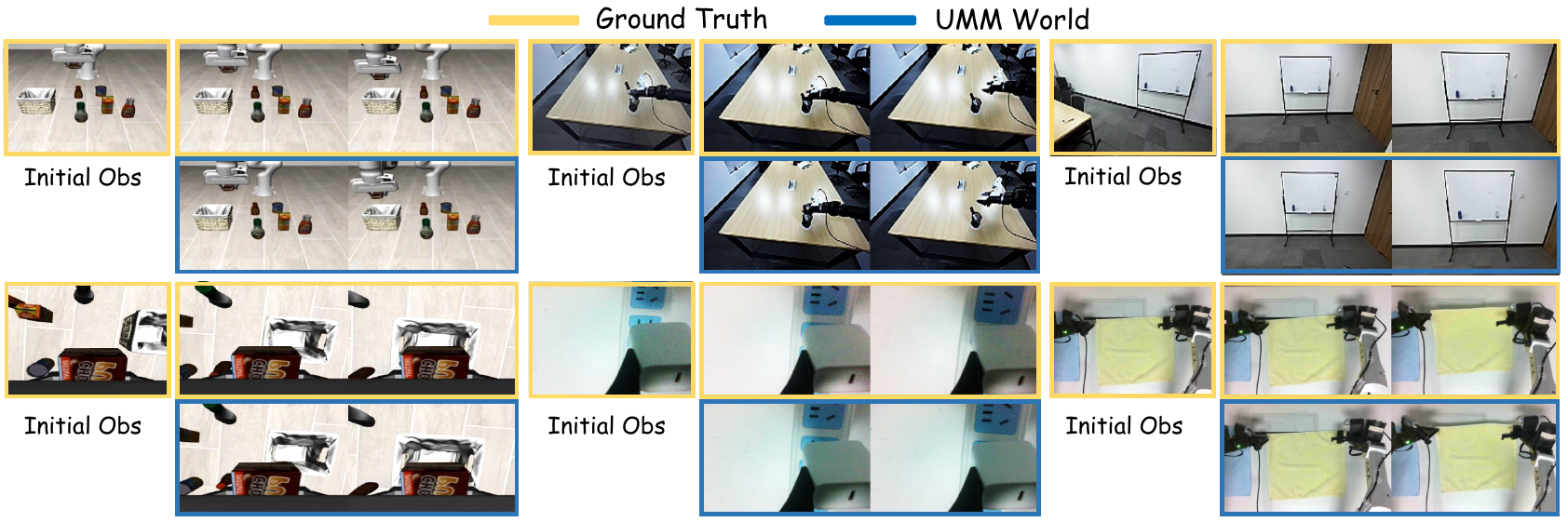}
  \caption{Qualitative visualization results of UMM-World.}
  \vspace{-2mm}
  \label{fig:world_model}
\end{figure*}

\textbf{Generalizable Credit Assignment.} We visualize the learning dynamics of the value model on unseen full-horizon trajectories in Figure~\ref{fig:values}. We find that despite training with short branched rollouts, our value model progressively aligns with ground-truth returns through learning cross-chunk temporal dependencies and value consistency, thereby achieving coherent long-horizon value estimates that accurately reflect trajectory outcomes. We posit that this stitching capability is the fundamental mechanism enabling VLA-MBPO to achieve significant policy improvement using only short-horizon model-based rollouts.

\subsection{Real-world Task Experiments}
\label{sec:exp_real}
Transitioning from simulation to the physical world introduces significant challenges, including complex non-rigid dynamics, sensor noise, and unmodeled environmental factors. In this section, we explore whether VLA-MBPO can improve performance of VLAs in the real-world scenarios.

\textbf{Experiment Setup.} As shown in Figure~\ref{fig:tasks}, we design five real-world tasks across two robotic platforms. On the bimanual robot \textbf{Arx-X5}, a) \textit{Plug Cable} requires sub-centimeter precision to insert a cable into a 3-mm socket, while b) \textit{Fold Towel} assesses bimanual manipulation of deformable objects. On the whole-body robot \textbf{Galaxy-R1}, c) \textit{Pick Cup} and d) \textit{Insert Pen} evaluate whole-body manipulation with disturbed robot poses and camera viewpoints, and e) \textit{Wipe Board} tests mobile whole-body control under partial observability. For each task, we collect expert demonstrations via human teleoperation, with approximately 50 trajectories for Arx-X5 tasks and 100 trajectories for Galaxy-R1 tasks, and perform SFT for $\pi_{0.5}$. We then collect 50 rollouts per task using $\pi_{0.5}$ (SFT) for subsequent VLA-MBPO training. Evaluation is conducted on 50 trajectories per task: 30 \textit{seen} and 20 \textit{unseen}, where the latter include novel objects, backgrounds, and spatial configurations. Additional details are provided in Appendix~\ref{app:real_world}.

\begin{figure}[!htbp]
  \centering
  \includegraphics[width=0.7\linewidth]{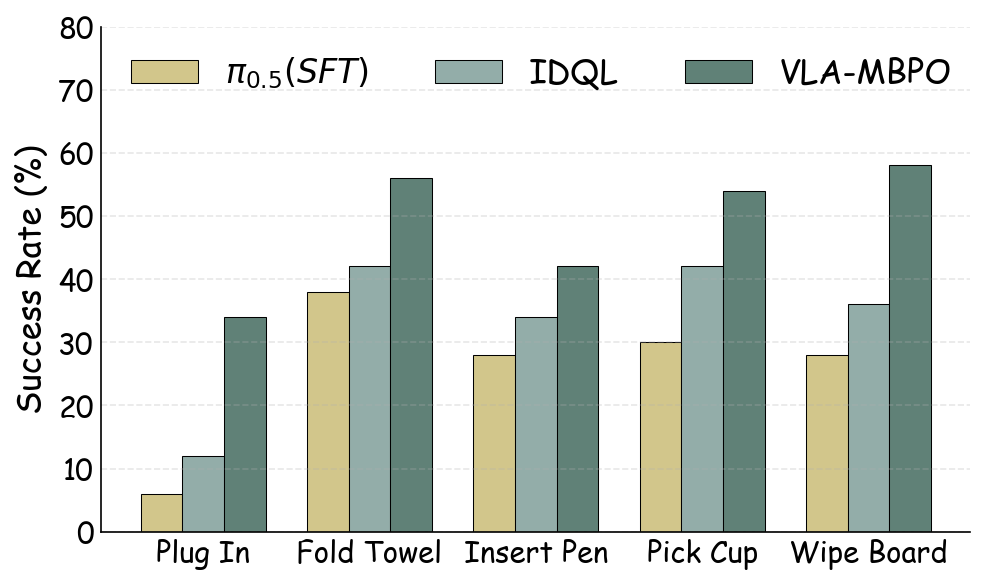}
  \caption{Performance comparison on real-world tasks.}
  \label{fig:real_world_performance}
  \vspace{-3mm}
\end{figure}
\textbf{Task Results.} As shown in Figure~\ref{fig:real_world_performance}, VLA-MBPO delivers consistent performance gains across both robotic platforms, validating its robustness in physical environments where dynamics are inherently stochastic. On Arx-X5, gains in \textit{Fold Towel} confirm effective modeling of deformable-object dynamics, while improved \textit{Plug Cable} performance highlights success in fine-grained, contact-rich manipulation. On Galaxy-R1, VLA-MBPO excels in high-DoF whole-body control, notably in \textit{Wipe Board} task with severe partial observability. Strong results on both seen and unseen conditions further validate its real-world generalization.

\subsection{Ablation Study}
While the main results highlight the effectiveness of VLA-MBPO, we now explore the influence of two key factors: the a) \textit{rollouts scheme}, and b) \textit{generated sample size}. We conduct all ablation studies on \textit{LIBERO-Long} suite.
\label{sec:exp_ablation}
\begin{table}[htbp]
\centering
\caption{Results of different rollouts scheme for VLA-MBPO.}
\begin{tabular}{lcccc}
\toprule
\multirow{3}{*}{\textbf{Rollouts Scheme}} & \multicolumn{4}{c}{\textbf{LIBERO Long}} \\
\cmidrule(lr){2-5} 
& \multicolumn{3}{c}{Branched Rollout} & \multirow{2}{*}{Full Horizon}\\
\cmidrule(lr){2-4} 
 & \textbf{1} & \textbf{2} & \textbf{4} \\
\midrule
\textbf{VLA-MBPO} & 63.9 & \textbf{66.8} & 62.9 & 52.8 \\
\bottomrule
\end{tabular}
\label{tab:ablation_result}
\vspace{-5mm}
\end{table}

\textbf{Rollouts Scheme.} We analyze the impact of different rollout lengths for VLA-MBPO, as shown in Table~\ref{tab:ablation_result}. The results consistently demonstrate that VLA-MBPO improves policy performance across various rollout horizons. However, there is a tradeoff: shorter rollouts (length=1 chunk) restrict exploration and complicate trajectory stitching, while excessively long horizons (length=4 chunks) lead to the unreliable trajectory generation. Both of these factors limit the potential for performance gains. Furthermore, using full-horizon rollouts in long-horizon tasks results in significant performance degradation due to intolerable compounding model errors.

\begin{figure}[htbp]
  \centering
  \includegraphics[width=0.7\linewidth]{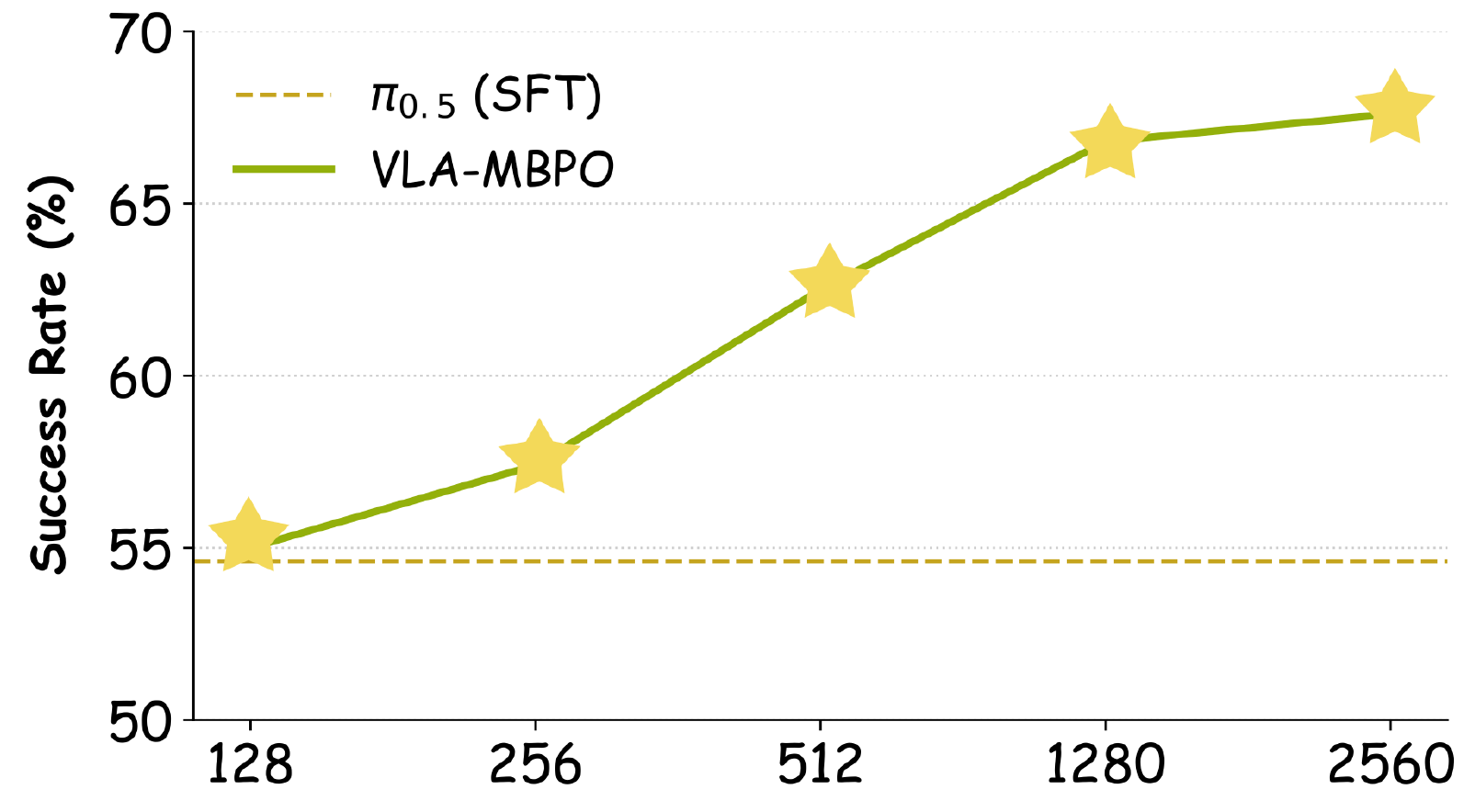}
  \caption{Results of different sample size for VLA-MBPO.}
  \label{fig:scaling}
  % \vspace{-1mm}
\end{figure}

\textbf{Imagined Sample Size.} We analyze the policy performance across different generated sample sizes per iteration as shown in Figure \ref{fig:scaling}. VLA-MBPO demonstrates a consistently monotonic improvement in success rate with increasing sample size. As the generated sample size grows, the value estimations become more accurate, which reduces biases in policy gradients and results in more stable policy improvements. This underscores the potential of VLA-MBPO to tackle more complex and diverse multitask learning challenges, offering a scalable and practical approach for world model-based RL for VLAs.

\section{Related Works}

\paragraph{Reinforcement Learning for VLA Models.}
Recent research has increasingly focused on adapting reinforcement learning to fine-tune VLA models for embodied control. \citet{li2025simplevla} and \citet{liu2025can} adapt standard on-policy algorithms \citep{schulman2017proximal} to fine-tune next-token prediction VLA models~\citep{kim2024openvla, kim2025fine} and other work like $\pi_\texttt{RL}$~\citep{chen2025pi} extends online fine-tuning to flow matching policies~\citep{black2024pi_0, intelligence2025pi_}. However, regardless of the backbone, the prohibitively high sample complexity of online interaction restricts their applicability in real-world scenarios.
To circumvent these costs, Offline RL has been applied to finetuning VLA. $\pi^*_\text{0.6}$~\citep{intelligence2025pi} improves VLA policies via advantage-conditioned policy optimization over heterogeneous offline data. Reinbot~\citep{zhang2025reinbot} enables offline policy optimization by predicting dense returns from static datasets. Despite their data efficiency, these methods operate purely on static datasets without modeling environment dynamics, limiting their ability to simulate consequences or recover from out-of-distribution states.

\textbf{Model-based Reinforcement Learning.}
Model-based reinforcement learning improves sample efficiency by learning environment dynamics to augment experience with imagined transitions~\citep{janner2019trust,mppve} or to enable planning~\citep{pets}. This paradigm has also been extended to offline RL by generating synthetic data beyond static datasets~\citep{yu2020mopo,pmlr-v202-sun23q,anystep,adm-v2}. A central challenge in MBRL is \emph{compounding error}, where small prediction errors accumulate over rollout horizons. MBPO~\citep{janner2019trust} mitigates error by performing short branched rollouts, and MOPO~\citep{yu2020mopo} and MOBILE~\citep{pmlr-v202-sun23q} further penalize policy optimization in regions with high uncertainty to prevent exploitation of inaccurate dynamics. More recently, \citet{park2025scalable} proposes an action-chunked dynamics model that predicts future states from action sequences, though it remains limited to low-dimensional state spaces and relies on a complex design for conservative Q-learning. In high-dimensional visual control and VLA settings, \citet{zhu2025wmpo,xiao2025world,li2025vla} make important progress, but their reliance on full-horizon rollouts leaves them vulnerable to compounding errors, limiting their applicability to long-horizon and complex tasks.

\section{Conclusion}
In this paper, we present VLA-MBPO, a practical world model-based RL framework for finetuning VLA models. By integrating a UMM-based backbone, interleaved view decoding for multi-view consistency, and chunk-level branched rollout to mitigate error compounding, VLA-MBPO significantly enhances policy improvement and sample efficiency. These advancements enables a more practical and scalable post-training method for VLA models, pave a way for more reliable real-world deployment.

\textbf{Limitations.} Despite the higher inference efficiency of UMM-World compared to video models, the sample generation phase still requires large computational resources, as detailed in Appendix~\ref{app:compute}. Additionally, since the UMM model used in our framework was not pretrained on action-labeled robotic data, VLA-MBPO still requires a small amount data to finetune the world model when applied to downstream tasks. A promising direction for future work is to enhance the pretraining process, enabling the world model to achieve one-shot or even zero-shot generalization.

% \section*{Accessibility}

% Authors are kindly asked to make their submissions as accessible as possible
% for everyone including people with disabilities and sensory or neurological
% differences. Tips of how to achieve this and what to pay attention to will be
% provided on the conference website \url{http://icml.cc/}.

\section*{Impact Statement}
This paper presents work whose goal is to advance the field of Machine
Learning. There are many potential societal consequences of our work, none
of which we feel must be specifically highlighted here.

% In the unusual situation where you want a paper to appear in the
% references without citing it in the main text, use \nocite
\nocite{langley00}

\bibliography{example_paper}
\bibliographystyle{icml2026}

%%%%%%%%%%%%%%%%%%%%%%%%%%%%%%%%%%%%%%%%%%%%%%%%%%%%%%%%%%%%%%%%%%%%%%%%%%%%%%%
%%%%%%%%%%%%%%%%%%%%%%%%%%%%%%%%%%%%%%%%%%%%%%%%%%%%%%%%%%%%%%%%%%%%%%%%%%%%%%%
% APPENDIX
%%%%%%%%%%%%%%%%%%%%%%%%%%%%%%%%%%%%%%%%%%%%%%%%%%%%%%%%%%%%%%%%%%%%%%%%%%%%%%%
%%%%%%%%%%%%%%%%%%%%%%%%%%%%%%%%%%%%%%%%%%%%%%%%%%%%%%%%%%%%%%%%%%%%%%%%%%%%%%%
\newpage
\appendix
\onecolumn
% \section{The Use of Large Language Models (LLMs)}
% Large language models (LLMs) were used in the preparation of this manuscript for sentence-level editing, including improving grammar, clarity, and readability.

% \section{Reproducibility Statement}
% We commit to ensuring the reproducibility of our research. The source code has been included in the supplementary materials to support replication and independent verification. A comprehensive description of the implementation details is provided in Appendix~\ref{app:impl}.

\section{Useful Lemmas}
\begin{lemma}[Total Variation Distance for Joint Distributions]
\label{lem:joint_tv}
Let $\mathbb{P}_1(x,y) = \mathbb{P}_1(x)\,\mathbb{P}_1(y | x)$ and $\mathbb{P}_2(x,y) = \mathbb{P}_2(x)\,\mathbb{P}_2(y | x)$ be two joint probability distributions over $(x,y)$. Then the total variation distance between $\mathbb{P}_1$ and $\mathbb{P}_2$ satisfies
\begin{equation}
    D_{\mathrm{TV}}\bigl(\mathbb{P}_1(x, y)\| \mathbb{P}_2(x, y)\bigr)
    \leq D_{\mathrm{TV}}\bigl(\mathbb{P}_1(x)\| \mathbb{P}_2(x)\bigr)
       + \max_{x} D_{\mathrm{TV}}\bigl(\mathbb{P}_1(y | x)\| \mathbb{P}_2(y | x)\bigr)
\end{equation}
\end{lemma}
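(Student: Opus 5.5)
We prove Lemma~\ref{lem:joint_tv} directly from the definition $D_{\mathrm{TV}}(\mathbb{P}\|\mathbb{Q}) = \frac12\sum_z |\mathbb{P}(z)-\mathbb{Q}(z)|$. For continuous variables, sums are replaced by integrals and the same steps go through. The idea is the standard add-and-subtract trick: insert the hybrid term $\mathbb{P}_2(x)\mathbb{P}_1(y\mid x)$ and apply the triangle inequality.

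Concretely, we write
\begin{equation*}
\mathbb{P}_1(x)\mathbb{P}_1(y\mid x)-\mathbb{P}_2(x)\mathbb{P}_2(y\mid x)
=\bigl(\mathbb{P}_1(x)-\mathbb{P}_2(x)\bigr)\mathbb{P}_1(y\mid x)+\mathbb{P}_2(x)\bigl(\mathbb{P}_1(y\mid x)-\mathbb{P}_2(y\mid x)\bigr).
\end{equation*}
Taking absolute values, summing over $(x,y)$, and multiplying by $\frac12$ gives two terms. In the first term, the sum over $y$ of $\mathbb{P}_1(y\mid x)$ equals $1$, so it reduces to $\frac12\sum_x|\mathbb{P}_1(x)-\mathbb{P}_2(x)| = D_{\mathrm{TV}}(\mathbb{P}_1(x)\|\mathbb{P}_2(x))$. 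The second term equals $\sum_x \mathbb{P}_2(x)\, D_{\mathrm{TV}}\bigl(\mathbb{P}_1(\cdot\mid x)\|\mathbb{P}_2(\cdot\mid x)\bigr)$. Since $\mathbb{P}_2(x)$ is a probability distribution over $x$, this average is at most $\max_x D_{\mathrm{TV}}(\mathbb{P}_1(y\mid x)\|\mathbb{P}_2(y\mid x))$. Adding the two bounds yields the claim.

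There is essentially no obstacle. The only step needing care is the choice of which marginal multiplies the conditional difference. Using $\mathbb{P}_2(x)$ there, and $\mathbb{P}_1(y\mid x)$ in the other term, is what allows the first term to collapse cleanly via normalization. Beyond that, one should note that the maximum over $x$ may be taken over the support of $\mathbb{P}_2$, or as a supremum if $x$ is continuous. The intermediate expectation form, $\mathbb{E}_{x\sim\mathbb{P}_2}[D_{\mathrm{TV}}(\cdot)]$, is in fact a slightly stronger statement. It is worth recording, since the later value-gap theorems use expected rather than worst-case model error.
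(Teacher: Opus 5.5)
Your proof is correct and uses the same argument as the paper: add and subtract a hybrid term, apply the triangle inequality, use normalization, and bound the expectation by the max. The only difference is that the paper inserts $\mathbb{P}_1(x)\mathbb{P}_2(y\mid x)$ instead of your $\mathbb{P}_2(x)\mathbb{P}_1(y\mid x)$. Both choices collapse by normalization, so no special care is needed, but the paper's choice gives the intermediate bound with $\mathbb{E}_{x\sim\mathbb{P}_1}$, and that $\mathbb{P}_1$-weighted form is what matches the assumption in Lemma~\ref{lem:roll_tv} and the later model-error definitions (you recover it from your version by swapping the roles of $\mathbb{P}_1$ and $\mathbb{P}_2$, since $D_{\mathrm{TV}}$ is symmetric).
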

\begin{proof}
Recall that the total variation distance between two distributions $\mu$ and $\nu$ is defined as
$D_{\mathrm{TV}}(\mu, \nu) = \frac{1}{2} \int |d\mu - d\nu|$.
Applying this to the joint distributions $\mathbb{P}_1$ and $\mathbb{P}_2$, we have
\begin{align*}
& D_{\mathrm{TV}}\bigl(\mathbb{P}_1(x, y)\| \mathbb{P}_2(x, y)\bigr) \\
&= \frac{1}{2} \int_{x,y} \bigl| \mathbb{P}_1(x)\mathbb{P}_1(y | x) - \mathbb{P}_2(x)\mathbb{P}_2(y | x) \bigr| \, dx\,dy \\
&= \frac{1}{2} \int_{x,y} \bigl| \mathbb{P}_1(x)\mathbb{P}_1(y | x) - \mathbb{P}_1(x)\mathbb{P}_2(y | x) 
      + \mathbb{P}_1(x)\mathbb{P}_2(y | x) - \mathbb{P}_2(x)\mathbb{P}_2(y | x) \bigr| \, dx\,dy \\
&\leq \frac{1}{2} \int_{x,y} \mathbb{P}_1(x) \bigl| \mathbb{P}_1(y | x) - \mathbb{P}_2(y | x) \bigr| \, dx\,dy
     + \frac{1}{2} \int_{x,y} \bigl| \mathbb{P}_1(x) - \mathbb{P}_2(x) \bigr| \mathbb{P}_2(y | x) \, dx\,dy \\
&= \mathbb{E}_{x \sim \mathbb{P}_1}\!\left[ D_{\mathrm{TV}}\bigl(\mathbb{P}_1(y | x)\| \mathbb{P}_2(y | x)\bigr) \right]
   + D_{\mathrm{TV}}\bigl(\mathbb{P}_1(x)\| \mathbb{P}_2(x)\bigr) \\
&\leq \max_{x} D_{\mathrm{TV}}\bigl(\mathbb{P}_1(y | x)\| \mathbb{P}_2(y | x)\bigr)
   + D_{\mathrm{TV}}\bigl(\mathbb{P}_1(x)\| \mathbb{P}_2(x)\bigr)
\end{align*}
\end{proof}

\begin{lemma}[Total Variation Bound for Rollout State Distributions]
\label{lem:roll_tv}
Let $\mathbb{P}_1$ and $\mathbb{P}_2$ be two stochastic processes over states $\{s_t\}_{t \geq 0}$ that share the same initial state distribution, i.e., $\mathbb{P}_1(s_0) = \mathbb{P}_2(s_0)$. Suppose that the expected total variation distance between their transition kernels satisfies   $\max_t\mathbb{E}_{s \sim \mathbb{P}^t_1(s)}\!\left[ D_{\mathrm{TV}}\!\bigl( \mathbb{P}_1(s^\prime | s) \| \mathbb{P}_2(s^\prime | s) \bigr) \right] \leq \delta$.
Then, the total variation distance between the marginal state distributions at time $t$ is bounded as
\begin{equation}
    D_{\mathrm{TV}}\!\bigl( \mathbb{P}_1(s_{t})\| \mathbb{P}_2(s_{t}) \bigr) \leq t \delta
\end{equation}
\end{lemma}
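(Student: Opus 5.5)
The plan is to argue by induction on $t$, using Lemma~\ref{lem:joint_tv} at each step with the joint distribution over consecutive states $(s_{t}, s_{t+1})$. The base case $t=0$ is immediate, since the two processes share the initial distribution: $D_{\mathrm{TV}}(\mathbb{P}_1(s_0)\|\mathbb{P}_2(s_0))=0$.

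For the inductive step, assume $D_{\mathrm{TV}}(\mathbb{P}_1(s_t)\|\mathbb{P}_2(s_t))\le t\delta$. The marginal of $s_{t+1}$ is the second marginal of the joint $\mathbb{P}_i(s_t,s_{t+1})=\mathbb{P}_i(s_t)\mathbb{P}_i(s_{t+1}\mid s_t)$. Marginalization cannot increase total variation (triangle inequality inside the integral), so
\[
D_{\mathrm{TV}}\bigl(\mathbb{P}_1(s_{t+1})\|\mathbb{P}_2(s_{t+1})\bigr)\le D_{\mathrm{TV}}\bigl(\mathbb{P}_1(s_t,s_{t+1})\|\mathbb{P}_2(s_t,s_{t+1})\bigr).
\]

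To bound the right-hand side, I would not invoke Lemma~\ref{lem:joint_tv} as stated, because its conclusion uses a $\max_x$ over conditionals, while the hypothesis here only controls an \emph{expected} TV under $\mathbb{P}_1^t$. Instead I would reuse the penultimate line of that lemma's proof, which gives
\[
D_{\mathrm{TV}}(\mathbb{P}_1(s_t,s_{t+1})\|\mathbb{P}_2(s_t,s_{t+1}))\le \mathbb{E}_{s\sim\mathbb{P}_1^t}\bigl[D_{\mathrm{TV}}(\mathbb{P}_1(s'|s)\|\mathbb{P}_2(s'|s))\bigr]+D_{\mathrm{TV}}(\mathbb{P}_1(s_t)\|\mathbb{P}_2(s_t)).
\]
The expectation is taken under $\mathbb{P}_1$'s state marginal, which is exactly the distribution appearing in the hypothesis. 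So it is at most $\delta$, and the bound becomes $\delta+t\delta=(t+1)\delta$, closing the induction.

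The only delicate point is the direction of the mixing distribution. The add-and-subtract in the proof of Lemma~\ref{lem:joint_tv} must place $\mathbb{P}_1(x)$ in front of the conditional difference, as it does, so that the expectation falls under $\mathbb{P}_1^t$ rather than $\mathbb{P}_2^t$. I would state this refined form explicitly as a one-line consequence of that proof. I would also make the convention explicit that $\mathbb{P}_i(s'|s)$ is time-homogeneous, or, if it is not, that the hypothesis is read with the time-$t$ kernel inside the $\max_t$.
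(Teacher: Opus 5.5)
Your proposal is correct and follows essentially the same route as the paper: induction on $t$, where the inductive step moves the absolute value inside the integral over the previous state and adds and subtracts $\mathbb{P}_1(s_{t-1})\mathbb{P}_2(s_t\mid s_{t-1})$, giving the $\mathbb{P}_1^{t-1}$-weighted expected conditional TV (at most $\delta$) plus the previous marginal TV (at most $(t-1)\delta$ by induction). Splitting this into ``marginalization does not increase TV'' and the penultimate line of the proof of Lemma~\ref{lem:joint_tv} is the same computation the paper writes out inline, and your observations about the $\max_x$ versus expectation mismatch and the time-homogeneity convention are sound clarifications.
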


\begin{proof}
We proceed by induction on $t$. For $t = 0$, we have $\mathbb{P}_1(s_0) = \mathbb{P}_2(s_0)$ by assumption, so $D_{\mathrm{TV}}(\mathbb{P}_1(s_0)\| \mathbb{P}_2(s_0)) = 0 \leq 0 \cdot \delta$, which establishes the base case.

Assume the claim holds for some $t - 1 \geq 0$, i.e.,
\begin{equation}
    D_{\mathrm{TV}}\!\bigl( \mathbb{P}_1(s_{t-1})\| \mathbb{P}_2(s_{t-1}) \bigr) \leq (t-1)\delta
\end{equation}

We now bound the distance at step $n$. By the law of total probability, the marginal densities satisfy
\[
    \mathbb{P}_i(s_{t}) = \int \mathbb{P}_i(s_{t} | s_{t-1}) \, \mathbb{P}_i(s_{t-1}) \, ds_{t-1}, \quad i \in \{1,2\}
\]

The total variation distance is then
\begin{equation}
\begin{aligned}
& D_{\mathrm{TV}}\!\bigl( \mathbb{P}_1(s_{t}),\, \mathbb{P}_2(s_{t}) \bigr) \\
&= \frac{1}{2} \int \bigl| \mathbb{P}_1(s_{t}) - \mathbb{P}_2(s_{t}) \bigr| \, ds_{t} \\
&= \frac{1}{2} \int \left| \int \Bigl[ \mathbb{P}_1(s_{t} | s_{t-1}) \mathbb{P}_1(s_{t-1}) 
      - \mathbb{P}_2(s_{t} | s_{t-1}) \mathbb{P}_2(s_{t-1}) \Bigr] ds_{t-1} \right| ds_{t} \\
&\leq \frac{1}{2} \int \int \Bigl| \mathbb{P}_1(s_{t} | s_{t-1}) \mathbb{P}_1(s_{t-1}) 
      - \mathbb{P}_2(s_{t} | s_{t-1}) \mathbb{P}_2(s_{t-1}) \Bigr| ds_{t-1} \, ds_{t} \\
&\leq \frac{1}{2} \int \int \mathbb{P}_1(s_{t-1}) \bigl| \mathbb{P}_1(s_{t} | s_{t-1}) - \mathbb{P}_2(s_{t} | s_{t-1}) \bigr| \, ds_{t} \, ds_{t-1} + \frac{1}{2} \int \int \bigl| \mathbb{P}_1(s_{t-1}) - \mathbb{P}_2(s_{t-1}) \bigr| \, \mathbb{P}_2(s_{t} | s_{t-1}) \, ds_{t} \, ds_{t-1}\\
&= \frac{1}{2} \int \int \mathbb{P}_1(s_{t-1}) \bigl| \mathbb{P}_1(s_{t} | s_{t-1}) - \mathbb{P}_2(s_{t} | s_{t-1}) \bigr| \, ds_{t} \, ds_{t-1} + \frac{1}{2} \int \bigl| \mathbb{P}_1(s_{t-1}) - \mathbb{P}_2(s_{t-1}) \bigr|\, ds_{t-1}\\
&= \mathbb{E}_{s_{t-1} \sim \mathbb{P}_1^{t-1}(s)}\!\left[ D_{\mathrm{TV}}\!\bigl( \mathbb{P}_1(s_{t} | s_{t-1})\|\mathbb{P}_2(s_{t} | s_{t-1}) \bigr) \right] + D_{\mathrm{TV}}\!\bigl( \mathbb{P}_1(s_{t-1})\| \mathbb{P}_2(s_{t-1}) \bigr)
\end{aligned}
\end{equation}

By the assumption, the first term is at most $\delta$, and by the induction hypothesis, the second term is at most $(n-1)\delta$. Therefore,
\begin{equation}
D_{\mathrm{TV}}\!\bigl( \mathbb{P}_1(s_{t})\| \mathbb{P}_2(s_{t}) \bigr) \leq \delta + (n-1)\delta = n\delta
\end{equation}

This completes the induction step. Hence, for all $n \geq 0$,
\begin{equation}
D_{\mathrm{TV}}\!\bigl( \mathbb{P}_1(s_{t})\| \mathbb{P}_2(s_{t}) \bigr) \leq n \delta
\end{equation}
\end{proof}

\begin{lemma}[Value Divergence, Chunk-Level Policy, Chunk-Level World Model]
\label{lemma:policy_error}
Let $k \in \mathbb{N}^+$ be the output chunk size of policies. Consider two stochastic $k$-steps policies $\pi_1^k$ and $\pi_2^k$, where each policy selects an action sequence $\tilde{a}_t=a_{t:t+k-1}$ given state $s_t$. Suppose that policy $\pi_1^k$ and $\pi_2^k$ are evaluated in the chunk-level world model $T_1^k(s^\prime|s,\tilde{a})$ and $T_2^k(s^\prime|s,\tilde{a})$ respectively. Assume the reward per-step satisfies $r(s_t,a_t) \in [0, r_{\max}]$, and define the $k$-steps total variation divergence between the policies as $\epsilon_\pi^k := \max_{s} D_{\mathrm{TV}}\!\left( \pi_1^k(\tilde{a} | s) \,\|\, \pi_2^k(\tilde{a} | s) \right)$. Likewise, the dynamics divergence is bounded by $\epsilon_m^k=\max_t\mathbb{E}_{s\sim d_t^{\pi_1^k}} \left[ D_{\mathrm{TV}}\left(T^k_1(s^\prime | s, \tilde{a}) \|T_2^k(s^\prime | s, \tilde{a})\right) \right]$. Then the value gap of two policies is bounded by
\begin{equation}
    \big| V(\pi_1^k) - V(\pi_2^k) \big| 
    \leq \frac{2r_{\max}}{1-\gamma}\left[\frac{\gamma^k}{1 - \gamma^k}\epsilon_\pi^k + \epsilon_\pi^k+\frac{\gamma^k}{1 - \gamma^k}\epsilon_m^k\right]
\end{equation}
\end{lemma}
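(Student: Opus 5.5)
We bound the value gap by comparing the discounted state-visitation distributions at chunk boundaries and treating the within-chunk rewards separately. This is the MBPO-style argument of \citet{janner2019trust}, lifted to the chunk level. Write each value as a sum over chunks. The process $(s_0, s_k, s_{2k}, \dots)$ under $(\pi_i^k, T_i^k)$ is a Markov chain with kernel $P_i(s'|s)=\int \pi_i^k(\tilde a|s)T_i^k(s'|s,\tilde a)\,d\tilde a$. The per-chunk return is a bounded functional of the joint law of $(s_{jk},\tilde a_{jk})$ and the within-chunk states, and is at most $r_{\max}\sum_{i=0}^{k-1}\gamma^i$. Hence
\begin{equation*}
|V(\pi_1^k)-V(\pi_2^k)| \le \sum_{j\ge 0} \gamma^{jk}\cdot 2 r_{\max}\frac{1-\gamma^k}{1-\gamma}\, D_{\mathrm{TV}}\bigl(p_1(s_{jk},\tilde a_{jk},\cdot)\,\|\,p_2(s_{jk},\tilde a_{jk},\cdot)\bigr),
\end{equation*}
where the factor $2$ comes from $|\mathbb{E}_p f-\mathbb{E}_q f|\le 2\|f\|_\infty D_{\mathrm{TV}}(p\|q)$. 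We use the convention, standard in this line of work, that the within-chunk transitions are shared (or absorbed into the chunk model), so only the chunk-boundary joint matters.

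Next we control each term with Lemma~\ref{lem:joint_tv}. It gives
\begin{equation*}
D_{\mathrm{TV}}\bigl(p_1(s_{jk},\tilde a)\,\|\,p_2(s_{jk},\tilde a)\bigr)\le D_{\mathrm{TV}}\bigl(p_1(s_{jk})\,\|\,p_2(s_{jk})\bigr)+\epsilon_\pi^k .
\end{equation*}
For the marginal at chunk index $j$, we apply the inductive argument of Lemma~\ref{lem:roll_tv} to the chunk chains $P_1,P_2$. The per-step kernel discrepancy, measured under $d^{\pi_1^k}_j$, is bounded by Lemma~\ref{lem:joint_tv} applied to $(\tilde a,s')$: it is at most $\epsilon_\pi^k$ plus the expected dynamics divergence, that is, at most $\epsilon_\pi^k+\epsilon_m^k$. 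Here we must use the version of Lemma~\ref{lem:joint_tv} in which the conditional term is an expectation under $\mathbb{P}_1$; the proof of that lemma yields this version before its last step. This gives $D_{\mathrm{TV}}(p_1(s_{jk})\|p_2(s_{jk}))\le j(\epsilon_\pi^k+\epsilon_m^k)$.

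Finally we sum the pieces. The $\epsilon_\pi^k$ term contributes $\sum_j\gamma^{jk}=1/(1-\gamma^k)$, which cancels the $(1-\gamma^k)$ factor and yields $\frac{2r_{\max}}{1-\gamma}\epsilon_\pi^k$. The drift term contributes $\sum_j j\gamma^{jk}=\gamma^k/(1-\gamma^k)^2$, which after the same cancellation yields $\frac{2r_{\max}}{1-\gamma}\frac{\gamma^k}{1-\gamma^k}(\epsilon_\pi^k+\epsilon_m^k)$. Adding the two gives exactly the claimed bound.

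The main obstacle is the within-chunk reward bookkeeping. We must argue that the intermediate rewards of a chunk depend only on $(s_{jk},\tilde a_{jk})$ through a kernel common to both processes, or that their laws inherit the same TV bound. Otherwise an extra model-error term appears at chunk index $0$. I would handle this by defining the chunk-level reward $r^k(s,\tilde a)$ as a deterministic function of the chunk, consistent with the chunk-level reward defined in Section~\ref{sec:3.1}, so that the per-chunk return is a bounded function of $(s_{jk},\tilde a_{jk})$ alone.
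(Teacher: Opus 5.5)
Your proposal is correct and follows essentially the same route as the paper: both pass to the temporally-extended MDP with discount $\gamma^k$ and a chunk reward treated as a function of $(s,\tilde a)$, split the state--action discrepancy via Lemma~\ref{lem:joint_tv} into $\epsilon_\pi^k$ plus a state-marginal term, bound that term by Lemma~\ref{lem:roll_tv} on the chunk chain with kernel discrepancy $\epsilon_\pi^k+\epsilon_m^k$, and sum the geometric series $\sum_j\gamma^{jk}$ and $\sum_j j\gamma^{jk}$. The differences are cosmetic: you decompose per chunk index before summing instead of working with the normalized occupancy measure, and your use of the expectation form of Lemma~\ref{lem:joint_tv} (weighting the model term by $\pi_1^k$) matches the definition of $\epsilon_m^k$ more faithfully than the paper's split, which weights by $\pi_2^k$ and then drops the weight.
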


\begin{proof}
We reformulate the problem as a \emph{temporally-extended MDP} $\mathcal{M}^k = (\mathcal{S}, \mathcal{A}^k, \tilde{T}, \tilde{r}, \gamma^k)$, where:
\begin{itemize}
    \item The state space $\mathcal{S}$ is unchanged.
    \item The action space is chunk-level $\mathcal{A}^k$, with $\tilde{a} = a_{0:k-1} \in \mathcal{A}^k$.
    \item The transition kernel $T^k(s' \mid s, \tilde{a})$ denotes the probability of reaching state $s'$ after executing the $k$-steps action sequence $\tilde{a}$ from state $s$.
    \item The reward $\tilde{r}(s, \tilde{a}) := \sum_{i=0}^{k-1} \gamma^i r(s_{i}, a_{i})$ is the $\gamma$-discounted sum within the chunk. Since $r(s,a) \in [0, r_{\max}]$, the chunk reward is bounded by $\tilde{r}_{\max} = r_{\max} \frac{1 - \gamma^k}{1 - \gamma}$.
    \item The discount factor between macro-steps is $\gamma^k$.
\end{itemize}

Crucially, the expected return of any $k$-steps policy is identical under the original MDP and its temporally-extended variant. Let $d^{\pi^k}_t$ denote the state-action density of policy $\pi^k$ at the timestep $t$ and $d^{\pi^k}=(1 - \gamma^k)\sum_{t=0}^\infty (\gamma^k)^t d^{\pi^k}_t$ is the overall discounted occupancy measure, such that $V(\pi^k)=\int r^k(s,\tilde{a}) \sum_{t=0}^\infty (\gamma^k)^t d^{\pi^k}_t(s,\tilde{a}) dsd\tilde{a}=\frac{1}{1-\gamma^k}\int r^k(s,\tilde{a})d^{\pi^k}(s,\tilde{a}) dsd\tilde{a}$, then we have
\begin{equation}
\label{eq:v2d}
\begin{aligned}
    &\left|V(\pi_1^k) - V(\pi_2^k)\right|\\
    =&\frac{1}{1-\gamma^k}\left|\int r^k(s,\tilde{a})d^{\pi_1^k}(s,\tilde{a}) dsd\tilde{a} - \int r^k(s,\tilde{a})d^{\pi_2^k}(s,\tilde{a}) dsd\tilde{a}\right|\\
    \leq &\frac{1}{1-\gamma^k}\int  r^k(s,\tilde{a})\left|d^{\pi_1^k}(s,\tilde{a})- d^{\pi_2^k}(s,\tilde{a})\right| dsd\tilde{a}\\
    \leq&\frac{r_{\max}^k}{1-\gamma^k}\int  \left|d^{\pi_1^k}(s,\tilde{a})- d^{\pi_2^k}(s,\tilde{a})\right| dsd\tilde{a}\\
    \leq&\frac{2r_{\max}}{1-\gamma}D_\mathrm{TV}\left(d^{\pi_1^k}(s,\tilde{a})\|d^{\pi_2^k}(s,\tilde{a})\right)
\end{aligned}
\end{equation}

By Lemma~\ref{lem:joint_tv}, we have
\begin{equation}
    D_\mathrm{TV}(d^{\pi_1^k}(s,\tilde{a})\|d^{\pi_2^k}(s,\tilde{a}))\leq\max_{s} D_{\mathrm{TV}}\left(\pi^k_1(\tilde{a} | s)\| \pi^k_2(\tilde{a} | s)\right)
   + D_{\mathrm{TV}}\left(d^{\pi_1^k}(s)\| d^{\pi_2^k}(s)\right)
\end{equation}

The first term is exactly $\epsilon_\pi^k$. For the second term, we have
\begin{equation}
\label{eq:occ_decomp}
\begin{aligned}
    D_{\mathrm{TV}}\bigl(d^{\pi_1^k}(s)\| d^{\pi_2^k}(s)\bigr) &= \frac{1}{2}\int \left|(1-\gamma^k)\sum_{t=0}^\infty (\gamma^k)^t d^{\pi^k_1}_t(s) -(1-\gamma^k)\sum_{t=0}^\infty (\gamma^k)^t d^{\pi^k_2}_t(s) \right| ds \\
    &\leq \frac{1}{2}(1-\gamma^k)\sum_{t=0}^\infty (\gamma^k)^t\int \left|d^{\pi^k_1}_t(s) - d^{\pi^k_2}_t(s) \right| ds \\
    &=(1-\gamma^k)\sum_{t=0}^\infty (\gamma^k)^t D_\mathrm{TV}\left(d_t^{\pi^k_1}(s)\|d^{\pi_2^k}_t(s)\right) 
\end{aligned}
\end{equation}

By Lemma~\ref{lem:roll_tv}, we have
\begin{equation}
    \begin{aligned}
        &(1-\gamma^k) \sum_{t=0}^\infty (\gamma^k)^t D_\mathrm{TV}\left(d_t^{\pi^k_1}(s)\|d^{\pi_2^k}_t(s)\right) \\
        \leq& (1-\gamma^k)\sum_{t=0}^\infty (\gamma^k)^t t 
        \max_t\mathbb{E}_{s\sim d_t^{\pi_1^k}}\left[D_\mathrm{TV}\left(d_t^{\pi^k_1}(s^\prime|s)\|d_t^{\pi^k_2}(s^\prime|s\right)\right]\\
        \leq& \frac{\gamma^k}{1-\gamma^k}\max_t\mathbb{E}_{s\sim d_t^{\pi_1^k}}\left[D_\mathrm{TV}\left(d_t^{\pi^k_1}(s^\prime|s)\|d_t^{\pi^k_2}(s^\prime|s)\right)\right] \\
        =& \frac{\gamma^k}{2(1-\gamma^k)}\max_t\mathbb{E}_{s\sim d_t^{\pi_1^k}}\left[\int\left| T_1^k(s^\prime|s,\tilde{a})\pi^k_1(\tilde{a}|s)- T_2^k(s^\prime|s,\tilde{a})\pi^k_2(\tilde{a}|s)\right|d\tilde{a}\right]\\
        =& \frac{\gamma^k}{2(1-\gamma^k)}\max_t\mathbb{E}_{s\sim d_t^{\pi_\mathcal{D}^k}}\left[\int \left|T_1^k(s^\prime|s,\tilde{a})\left(\pi_1^k(\tilde{a}|s) - \pi_2^k(\tilde{a}|s)\right) + \pi_2^k(\tilde{a}|s)\left(T_1^k(s^\prime|s,\tilde{a}) -T_2^k(s^\prime|s,\tilde{a})\right)\right|d\tilde{a}\right]\\
        \leq& \frac{\gamma^k}{2(1-\gamma^k)}\max_t\mathbb{E}_{s\sim d_t^{\pi_\mathcal{D}^k}}\left[\int \left|\pi_1^k(\tilde{a}|s) - \pi_2^k(\tilde{a}|s)\right|d\tilde{a} + \int\left|T_1^k(s^\prime|s,\tilde{a}) -T_2^k(s^\prime|s,\tilde{a})\right|d\tilde{a}\right]\\
        \leq& \frac{\gamma^k}{(1-\gamma^k)} \left(\max_s D_{\mathrm{TV}}\bigl(\pi^k_1(\tilde{a} | s)\| \pi^k_2(\tilde{a} | s)\bigr)+\max_t\mathbb{E}_{s\sim d_t^{\pi_1^k}} \left[ D_{\mathrm{TV}}\left(T^k_1(s^\prime | s, \tilde{a}) \|T_2^k(s^\prime | s, \tilde{a})\right) \right]\right)\\
        \leq & \frac{\gamma^k(\epsilon_\pi^k+\epsilon_m^k)}{(1-\gamma^k)}
    \end{aligned}
\end{equation}

Finally, we have  
\begin{equation}
| V(\pi_1^k) - V(\pi_2^k) \big| \leq  \frac{2r_{\max}}{1-\gamma}D_\mathrm{TV}(d^{\pi_1^k}\|d^{\pi_2^k}) \leq \frac{2r_{\max}}{1-\gamma}(\epsilon_\pi^k + \frac{\gamma^k(\epsilon_\pi^k+\epsilon_m^k)}{(1-\gamma^k)})=\frac{2\gamma^k r_{\max} (\epsilon_\pi^k+\epsilon_m^k)}{(1 - \gamma)(1 - \gamma^k)} + \frac{2r_{\max}\epsilon_\pi^k}{1-\gamma}
\end{equation}
\end{proof}

\begin{lemma}[Value Divergence, Chunk-level Policy, Chunk-level World Model,
Branched Rollout]
Let $k \in \mathbb{N}^+$ be the output chunk size of policies. Consider two stochastic $k$-step policies $\pi_1^k$ and $\pi_2^k$, where each policy selects an action sequence $\tilde{a}_t=a_{t:t+k-1}$ given state $s_t$. Suppose that policy $\pi_1^k$ and $\pi_2^k$ are evaluated in the chunk-level world model $T_1^k(s^\prime|s,\tilde{a})$ and $T_2^k(s^\prime|s,\tilde{a})$ respectively. Assume we run a branched rollout of length $n$ with chunk size $k$. Before the branch ("pre" branch), we assume that the expected total variation divergence of dynamics is bounded as $\max_t\mathbb{E}_{s\sim d_t^{\pi^k_1}}D_\mathrm{TV}\left(T_1^{k,\text{pre}}(s^\prime|s,\tilde{a})\|T_2^{k, \text{pre}}(s^\prime|s,\tilde{a})\right)\leq\epsilon_m^{k,\text{pre}}$ and after the branch as $\max_t\mathbb{E}_{s\sim d_t^{\pi_1^k}}D_\mathrm{TV}\left(T_1^{k,\text{post}}(s^\prime|s,\tilde{a})\|T_2^{k,\text{post}}(s^\prime|s,\tilde{a})\right)\leq\epsilon_m^{\text{post}}$. Likewise, the policy divergence is bounded pre-branch and post-branch by $\epsilon_\pi^{k,\text{pre}}$ and $\epsilon_\pi^{k,\text{post}}$, respectively. Then, the value gap of policy $\pi^k_1$ and $\pi^k_2$ is bounded as:
\begin{equation}
    \left|V(\pi_1^k)-V(\pi_2^k)\right| \leq \frac{2r_{\max}}{1-\gamma}\left[n\epsilon_m^{k,\text{post}} + (n+1)\epsilon_\pi^{k,\text{post}} +  \frac{(\gamma^k)^{n+1}(\epsilon_m^{k,\text{pre}} + \epsilon_\pi^{k,\text{pre}})}{1-\gamma^k}+(\gamma^k)^n\epsilon_\pi^{k,\text{pre}}\right]
\end{equation}
\end{lemma}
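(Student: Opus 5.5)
The bound follows the MBPO-style branched-rollout analysis (Janner et al.), carried out in the temporally-extended MDP $\mathcal{M}^k$ from the proof of Lemma~\ref{lemma:policy_error}. The reward rescaling step~\eqref{eq:v2d} carries over unchanged, giving
\[
|V(\pi_1^k)-V(\pi_2^k)|\le \frac{2r_{\max}}{1-\gamma}\,D_{\mathrm{TV}}\bigl(d^{\pi_1^k}(s,\tilde a)\|d^{\pi_2^k}(s,\tilde a)\bigr).
\]
It therefore suffices to bound the occupancy-measure TV by the bracket in the statement. As in~\eqref{eq:occ_decomp}, I would expand $d^{\pi^k}=(1-\gamma^k)\sum_t(\gamma^k)^t d_t^{\pi^k}$ and bound it by $(1-\gamma^k)\sum_t(\gamma^k)^t D_{\mathrm{TV}}(d_t^{\pi_1^k}(s,\tilde a)\|d_t^{\pi_2^k}(s,\tilde a))$. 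Lemma~\ref{lem:joint_tv} then gives a per-step bound of the form (policy TV at step $t$) $+$ (state-marginal TV at step $t$).

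\textbf{Step 1: split the time index.} Macro-steps $t\le n$ are governed by the post-branch quantities, i.e.\ the branched rollout started from data states. Macro-steps $t>n$ are governed by the pre-branch quantities. For $t\le n$, Lemma~\ref{lem:roll_tv} applied to the joint kernel $\pi^k T^k$ (with the decomposition $T_1\pi_1-T_2\pi_2=T_1(\pi_1-\pi_2)+\pi_2(T_1-T_2)$ used in Lemma~\ref{lemma:policy_error}) gives a state-marginal TV of at most $t(\epsilon_m^{k,\text{post}}+\epsilon_\pi^{k,\text{post}})\le n(\epsilon_m^{k,\text{post}}+\epsilon_\pi^{k,\text{post}})$. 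Adding the step-$t$ policy term $\epsilon_\pi^{k,\text{post}}$ yields a uniform bound $n\epsilon_m^{k,\text{post}}+(n+1)\epsilon_\pi^{k,\text{post}}$. Because the weights $(1-\gamma^k)(\gamma^k)^t$ sum to at most $1$, the contribution of this block is at most that quantity. Bounding it uniformly instead of by $t$ avoids a $1/(1-\gamma^k)$ factor here.

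\textbf{Step 2: the tail $t>n$.} Restarting Lemma~\ref{lem:roll_tv} at the branch point, the state TV is at most $(t-n)(\epsilon_m^{k,\text{pre}}+\epsilon_\pi^{k,\text{pre}})$, plus a policy term $\epsilon_\pi^{k,\text{pre}}$. I would keep the policy term only for the tail weight, i.e.\ bound it by $(1-\gamma^k)\sum_{t\ge n}(\gamma^k)^t\epsilon_\pi^{k,\text{pre}}=(\gamma^k)^n\epsilon_\pi^{k,\text{pre}}$. The compounding part gives
\[
(1-\gamma^k)\sum_{t>n}(\gamma^k)^t(t-n)=\frac{(\gamma^k)^{n+1}}{1-\gamma^k},
\]
which produces the term $\frac{(\gamma^k)^{n+1}(\epsilon_m^{k,\text{pre}}+\epsilon_\pi^{k,\text{pre}})}{1-\gamma^k}$. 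This is the same geometric-series identity as in Lemma~\ref{lemma:policy_error}, shifted by $n$.

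\textbf{Main obstacle.} The delicate point is the handoff between the two regimes. Restarting the induction at step $n$ with zero initial discrepancy is only justified if the tail is understood, as in MBPO, as occurring relative to the branch point. Otherwise the $n$-step post-branch discrepancy would have to be carried into the tail and would contribute an extra term. I would first try to make the restart rigorous by defining the two processes to coincide in their branch-start state distribution, since both branch from $\mathcal{D}$. This is exactly the setting the statement intends.

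Summing the two blocks gives the bracket, and multiplying by $\frac{2r_{\max}}{1-\gamma}$ completes the proof.
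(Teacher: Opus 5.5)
Your reduction to the occupancy TV, the split at $n$, the uniform head-block bound, and the geometric-series evaluation of the tail all match the paper. Step~2 as written, however, has a gap exactly where you suspected. Restarting Lemma~\ref{lem:roll_tv} at $t=n$ with zero initial discrepancy does not follow from the hypotheses: nothing in the statement forces $d_n^{\pi_1^k}(s)$ and $d_n^{\pi_2^k}(s)$ to coincide. Your proposed remedy does not close the gap either. It adds an assumption that is not in the statement. Moreover, in your own indexing the branched rollout from $\mathcal{D}$ starts at $t=0$. A common branch-start distribution therefore pins the two processes together at $t=0$, not at $t=n$, where they may already differ by $n(\epsilon_m^{k,\text{post}}+\epsilon_\pi^{k,\text{post}})$.

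The missing observation is that carrying the post-branch discrepancy into the tail costs nothing. Your claim that it ``would contribute an extra term'' is mistaken. Continue the induction of Lemma~\ref{lem:roll_tv} past step $n$. For $t>n$ the state TV is then at most $n(\epsilon_m^{k,\text{post}}+\epsilon_\pi^{k,\text{post}})+(t-n)(\epsilon_m^{k,\text{pre}}+\epsilon_\pi^{k,\text{pre}})$. Your Step~1 only used the weight $(1-\gamma^k)\sum_{t=0}^{n}(\gamma^k)^t=1-(\gamma^k)^{n+1}$. The carried term sits on the tail weight $(1-\gamma^k)\sum_{t>n}(\gamma^k)^t=(\gamma^k)^{n+1}$. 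Together these two contributions are at most $n\epsilon_m^{k,\text{post}}+(n+1)\epsilon_\pi^{k,\text{post}}$.

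This is precisely the paper's route. It keeps the carried term and bounds the post-branch part $n(\epsilon_m^{k,\text{post}}+\epsilon_\pi^{k,\text{post}})+\epsilon_\pi^{k,\text{post}}$ uniformly over all $t\ge 0$, where the total weight is exactly $1$. It sums only the $(t-n)$ compounding term and the $\epsilon_\pi^{k,\text{pre}}$ term over the tail, which gives the same two pre-branch terms you computed. With this replacement for the restart, your argument goes through without any extra assumption.
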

\begin{proof}
We again utilize the temporally-extended MDP $\mathcal{M}^k = (\mathcal{S}, \mathcal{A}^k, \tilde{T}, \tilde{r}, \gamma^k)$. In this view, a branched rollout of length $n$ chunks starting from $s \sim d^{\pi^k_1}$ corresponds to a standard trajectory in $\mathcal{M}^k$ with discount factor $\gamma^k$, but truncated or branched.

Similar to Equation~\ref{eq:v2d} and Equation~\ref{eq:occ_decomp}, we obtain
\begin{equation}
    \left|V(\pi_1^k)-V(\pi_2^k)\right|
    \leq\frac{2r_{\max}}{1-\gamma}D_\mathrm{TV}\left(d^{\pi^k_1}(s,\tilde{a})\|d^{\pi^k_2}(s,\tilde{a})\right)
    \leq  \frac{2r_{\max}}{1-\gamma}(1-\gamma^k)\sum_{t=0}^\infty (\gamma^k)^t D_\mathrm{TV}\left(d_t^{\pi^k_1}(s,\tilde{a})\|d_t^{\pi^k_2}(s,\tilde{a})\right)
\end{equation}
For $t\leq n$, by Lemma~\ref{lem:joint_tv} and Lemma~\ref{lem:roll_tv}, we have
\begin{equation}
D_\mathrm{TV}\left(d_t^{\pi_1^k}(s, \tilde{a})\|d_t^{\pi^k_2}(s,\tilde{a})\right)
\leq D_\mathrm{TV}\left(d_t^{\pi^k_1}(s)\|d_t^{\pi^k_2}(s)\right) +\max_s D_{\mathrm{TV}}\bigl(\pi^k_1(\tilde{a} | s)\| \pi^k_2(\tilde{a} | s)\bigr)
\leq t(\epsilon_m^{k,\text{post}} + \epsilon_\pi^{k,\text{post}})+ \epsilon_\pi^{k,\text{post}}
\end{equation}

And for $t\geq n$ we have
\begin{equation}
D_\mathrm{TV}\left(d_t^{\pi_1^k}(s, \tilde{a})\|d_t^{\pi^k_2}(s,\tilde{a})\right)\\
\leq n(\epsilon_m^{k,\text{post}} + \epsilon_\pi^{k,\text{post}}) + (t-n)(\epsilon_m^{k,\text{pre}} + \epsilon_\pi^{k,\text{pre}})+ \epsilon_\pi^{k,\text{post}}+ \epsilon_\pi^{k,\text{pre}}
\end{equation}

Combined the results above, we have
\begin{equation}
\begin{aligned}
&\frac{2r_{\max}(1-\gamma^k)}{1-\gamma}\sum_{t=0}^\infty(\gamma^k)^t\left(D_\mathrm{TV}\left(d_t^{\pi_1^k}(s, \tilde{a})\|d_t^{\pi^k_2}(s, \tilde{a})\right)\right)\\
\leq &\frac{2r_{\max}(1-\gamma^k)}{1-\gamma}\left(\sum_{t=0}^n(\gamma^k)^t D_\mathrm{TV}\left(d_t^{\pi^k_1}(s, \tilde{a})\|d_t^{\pi^k_2}(s, \tilde{a})\right) +\sum_{t=n+1}^\infty(\gamma^k)^t D_\mathrm{TV}\left(d_t^{\pi^k_1}(s, \tilde{a})\|d_t^{\pi^k_2}(s, \tilde{a})\right)\right) \\
\leq &\frac{2r_{\max}(1-\gamma^k)}{1-\gamma}\sum_{t=0}^n(\gamma^k)^t \left(t(\epsilon_m^{k,\text{post}} + \epsilon_\pi^{k,\text{post}})+\epsilon_\pi^{k,\text{post}}\right)\\ &+\frac{2r_{\max}(1-\gamma^k)}{1-\gamma}\sum_{t=n+1}^\infty(\gamma^k)^t\left( n(\epsilon_m^{k,\text{post}} + \epsilon_\pi^{k,\text{post}})+ (t-n)(\epsilon_m^{k,\text{pre}} + \epsilon_\pi^{k,\text{pre}})+ \epsilon_\pi^{k,\text{post}} + \epsilon_\pi^{k,\text{pre}})\right)\\
\leq& \frac{2r_{\max}(1-\gamma^k)}{1-\gamma}\sum_{t=0}^\infty(\gamma^k)^t \left(n (\epsilon_m^{k,\text{post}} +\epsilon_\pi^{k,\text{post}})+\epsilon_\pi^{k,\text{post}}\right)\\
&+\frac{2r_{\max}(1-\gamma^k)}{1-\gamma}\sum_{t=n+1}^\infty(\gamma^k)^t \left((t-n)(\epsilon_m^{k,\text{pre}} + \epsilon_\pi^{k,\text{pre}})+\epsilon_\pi^{k,\text{pre}}\right)\\
\leq &\frac{2r_{\max}(\epsilon_m^{k,\text{post}} + \epsilon_\pi^{k,\text{post}})}{1-\gamma} +\frac{2r_{\max}\epsilon_\pi^{k,\text{post}}}{1-\gamma}+  \frac{2r_{\max}(\gamma^k)^{n+1}(\epsilon_m^{k,\text{pre}} + \epsilon_\pi^{k,\text{pre}})}{(1-\gamma)(1-\gamma^k)}+\frac{2r_{\max}(\gamma^k)^n\epsilon_\pi^{k,\text{pre}}}{1-\gamma}
\end{aligned}
\end{equation}

Simplify the inequality above, finally we have
\begin{equation}
    \left|V(\pi_1^k)-V(\pi_2^k)\right| \leq \frac{2r_{\max}}{1-\gamma}\left[n\epsilon_m^{k,\text{post}} + (n+1)\epsilon_\pi^{k,\text{post}} +  \frac{(\gamma^k)^{n+1}(\epsilon_m^{k,\text{pre}} + \epsilon_\pi^{k,\text{pre}})}{1-\gamma^k}+(\gamma^k)^n\epsilon_\pi^{k,\text{pre}}\right]
\end{equation}
\end{proof}

\section{Theoretical Analysis}
\label{app:theore}
\begin{theorem}[Value Gap, Chunk-Level Policy, Step-Level World Model]  
Given an offline dataset \(\mathcal{D}\) and a chunk size \(k \in \mathbb{N}^+\), where the policy selects an action sequence \(\tilde{a}_t = a_{t:t+k-1}\) given state \(s_t\). Assume the reward per-step satisfies $r(s_t,a_t) \in [0, r_{\max}]$, and define the \(k\)-step total variation divergence between the policy \(\pi^k\) and the behavior policy \(\pi_\mathcal{D}^k\) is bounded by \(\epsilon_\pi^k = \max_s D_{\mathrm{TV}}\left(\pi_\mathcal{D}^k(\tilde{a}|s) \|\pi^k(\tilde{a}|s)\right)\). Let \(\epsilon_m=\max_t\mathbb{E}_{s\sim\mathcal{D}^{t}} \left[ D_{\mathrm{TV}}\left(T(s^\prime | s, a) \| \hat{T}(s^\prime | s, a)\right) \right]\) denote the expected total variation divergence of learned dynamics $\hat{T}$. Then, the value gap of policy \(\pi^k\) under the true environment and the learned world model is bounded by:

\begin{equation}
|V(\pi^k) - \hat{V}(\pi^k)| \leq \frac{2r_{\max}}{1-\gamma}\left[\frac{2\gamma^k  \epsilon_\pi^k}{1 - \gamma^k} + 2\epsilon_\pi^k +\frac{k\gamma^k}{1-\gamma^k}\epsilon_m\right]
\end{equation}
\end{theorem}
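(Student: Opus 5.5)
The plan is to reduce to the chunk-level setting of Lemma~\ref{lemma:policy_error} and pay two extra costs. The first is a triangle-inequality step through the behavior policy $\pi_\mathcal{D}^k$. The second converts the step-level model error $\epsilon_m$ into a chunk-level one.

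First, reformulate everything in the temporally-extended MDP $\mathcal{M}^k$ used in the proof of Lemma~\ref{lemma:policy_error}. Executing a chunk $\tilde{a}=a_{0:k-1}$ in the step-level world model $\hat{T}$ induces a chunk-level kernel $\hat{T}^k(s'\mid s,\tilde{a})=\int \prod_{i=0}^{k-1}\hat{T}(s_{i+1}\mid s_i,a_i)\,ds_{1:k-1}$. The true kernel $T^k$ is defined in the same way from $T$. Then $V(\pi^k)$ and $\hat V(\pi^k)$ are values of the chunk policy under $T^k$ and $\hat T^k$ respectively.

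Next, write
\[
|V(\pi^k)-\hat V(\pi^k)|\le |V(\pi^k)-V(\pi_\mathcal{D}^k)|+|V(\pi_\mathcal{D}^k)-\hat V(\pi^k)|.
\]
Both terms are handled by Lemma~\ref{lemma:policy_error} with $\pi_1^k=\pi_\mathcal{D}^k$ and $\pi_2^k=\pi^k$.
\begin{itemize}
\item For the first term, take $T_1^k=T_2^k=T^k$. The model error vanishes, giving $\frac{2r_{\max}}{1-\gamma}\bigl[\frac{\gamma^k}{1-\gamma^k}\epsilon_\pi^k+\epsilon_\pi^k\bigr]$.
\item For the second term, take $T_1^k=T^k$ and $T_2^k=\hat T^k$. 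This gives $\frac{2r_{\max}}{1-\gamma}\bigl[\frac{\gamma^k}{1-\gamma^k}\epsilon_\pi^k+\epsilon_\pi^k+\frac{\gamma^k}{1-\gamma^k}\epsilon_m^k\bigr]$, where $\epsilon_m^k=\max_t\mathbb{E}_{s\sim d_t^{\pi_\mathcal{D}^k}}D_{\mathrm{TV}}(T^k\|\hat T^k)$.
\end{itemize}
The choice $\pi_1^k=\pi_\mathcal{D}^k$ is what lets the expectation in $\epsilon_m^k$ be taken over data states, matching the $\mathcal{D}^t$ in the definition of $\epsilon_m$. Summing the two bounds yields exactly the $\frac{2\gamma^k}{1-\gamma^k}\epsilon_\pi^k+2\epsilon_\pi^k$ part of the claim. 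It then remains to show $\epsilon_m^k\le k\,\epsilon_m$.

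The main obstacle is this last inequality, converting chunk-level error into step-level error. I would apply Lemma~\ref{lem:roll_tv} inside a single chunk. Fix a chunk start $s$ and view the $k$ intra-chunk steps as two processes: $\mathbb{P}_1$ driven by $T$ and $\mathbb{P}_2$ driven by $\hat T$. They share the initial state $s$, and the actions $\tilde{a}\sim\pi_\mathcal{D}^k(\cdot\mid s)$ are held fixed. Lemma~\ref{lem:roll_tv} then bounds the TV distance between their endpoint distributions by $k$ times the maximal per-step expected TV. Averaging over $s\sim d_t^{\pi_\mathcal{D}^k}$ and $\tilde a$, the intra-chunk intermediate states are distributed as the data-state marginals $\mathcal{D}^{t'}$ at the corresponding step times $t'$. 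This is because the behavior policy generated $\mathcal{D}$, and the first process runs under the true $T$. Hence each per-step expected TV is at most $\epsilon_m$ by its definition as a max over $t$, so $\epsilon_m^k\le k\epsilon_m$. The delicate point is making this identification rigorous. The expectation in Lemma~\ref{lem:roll_tv} must be taken under the true-dynamics process, which is why $T$ has to play the role of $\mathbb{P}_1$. With that in hand, substituting $\epsilon_m^k\le k\epsilon_m$ gives the stated bound.
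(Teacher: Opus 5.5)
Your proposal is correct and matches the paper's proof: the same triangle inequality through $V(\pi_\mathcal{D}^k)$, the same two applications of Lemma~\ref{lemma:policy_error} with $\pi_1^k=\pi_\mathcal{D}^k$ (zero model error in the first, $T^k$ versus $\hat T^k$ in the second), and the same reduction $\epsilon_m^k\le k\epsilon_m$. The paper only asserts that last inequality, so your intra-chunk use of Lemma~\ref{lem:roll_tv} adds justification the paper lacks; when you write it out, average over $(s,\tilde a)$ before bounding each intra-chunk step, using the per-step telescoping sum that the lemma's proof gives, because applying the lemma at a fixed $s$ yields $\mathbb{E}_{s,\tilde a}\max_j(\cdot)$, which need not be bounded by $\max_j\mathbb{E}_{s,\tilde a}(\cdot)\le\epsilon_m$.
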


\begin{proof}
We decompose the value gap as follows:
\begin{equation}
\begin{aligned}
&|V(\pi^k) - \hat{V}(\pi^k)| \\= &|V(\pi^k) - V(\pi^k_{\mathcal{D}}) + V(\pi^k_{\mathcal{D}})- \hat{V}(\pi^k)| \\\leq& \underbrace{|V(\pi^k) - V(\pi^k_{\mathcal{D}})|}_{L_1} + \underbrace{|V(\pi^k_{\mathcal{D}}) - \hat{V}(\pi^k)|}_{L_2}
\end{aligned}
\end{equation}

For $L_1$, we apply Lemma \ref{lemma:policy_error} with zero dynamics divergence and obtain
\begin{equation}
L_1 \leq \frac{2r_{\max}}{1-\gamma}\left[\frac{\gamma^k}{1 - \gamma^k}\epsilon_\pi^k + \epsilon_\pi^k\right]
\end{equation}

For $L_2$, we notice that the chunk-level divergence $\epsilon_m^k=\max_t\mathbb{E}_{s\sim d_t^{\pi_1^k}} \left[ D_{\mathrm{TV}}\left(T^k(s^\prime | s, \tilde{a}) \|\hat{T}^k(s^\prime | s, \tilde{a})\right) \right]$ of step-level world models $T$ and $\hat{T}$ is bounded by $k\epsilon_m$, then we apply Lemma \ref{lemma:policy_error} again and obtain
\begin{equation}
    L_2 \leq \frac{2r_{\max}}{1-\gamma}\left[\frac{\gamma^k}{1 - \gamma^k}\epsilon_\pi^k +\epsilon_\pi^k+\frac{k\gamma^k}{1 - \gamma^k}\epsilon_m \right]
\end{equation}

Finally, combined with the $L_1$ and $L_2$ we have
\begin{equation}
    |V(\pi^k) - \hat{V}(\pi^k)| \leq \frac{2r_{\max}}{1-\gamma}\left[\frac{2\gamma^k  \epsilon_\pi^k}{1 - \gamma^k} + 2\epsilon_\pi^k +\frac{k\gamma^k}{1-\gamma^k}\epsilon_m\right]
\end{equation}
\end{proof}

\begin{theorem}[Value Gap, Chunk-level Policy, Chunk-level World Model, Branched Rollout]
\label{lem:branched_rollout}
Given an offline dataset $\mathcal{D}$ and chunk size $k \in \mathbb{N}^+$, where the policy selects an action sequence $\tilde{a}=a_{t:t+k-1}$ given state $s_t$. Suppose that policy $\pi^k$ is evaluated via $n$-chunks branched rollout in the chunk-level world model $\hat{T}^k(s^\prime|s,\tilde{a})$. Assume the reward per-step satisfies $r(s_t,a_t) \in [0, r_{\max}]$, and define the $k$-steps total variation divergence between the policies as $\epsilon_\pi^k = \max_{s} D_{\mathrm{TV}}\!\left( \pi_\mathcal{D}^k(\tilde{a} | s) \,\|\, \pi^k(\tilde{a} | s) \right)$. Moreover, assume that the expected total variation divergence of dynamics \textbf{on the current policy branched rollout} is defined as $\epsilon_{m}^{k,n} =\max_{t\leq n}\mathbb{E}_{s\sim d^{\pi^k}_{t,s_0\sim\mathcal{D}}}\left[D_\mathrm{TV}\left(T^{k}(s^\prime|s,\tilde{a})\|\hat{T}^{k}(s^\prime|s,\tilde{a})\right)\right]$. Then the value gap of policy \(\pi^k\) under the true environment and the learned world model is bounded by:
\begin{equation}
\left| V(\pi^k) - \hat{V}^\text{branch}(\pi^k) \right| \leq  \frac{2r_{\max}}{1-\gamma}\left[\frac{(\gamma^k)^{n+1}}{1-\gamma^k}\epsilon_\pi^k+(\gamma^k)^n \epsilon_\pi^k+n\epsilon_{m}^{k,n}\right]
\end{equation}
\end{theorem}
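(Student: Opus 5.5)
The plan is to use the same triangle decomposition as in the proof of the previous theorem, and to take the branched-rollout lemma stated just before this section (the unnumbered Value Divergence lemma for branched rollouts) as the main tool. We insert the behavior policy as an intermediate comparison point. The true value $V(\pi^k)$ is obtained by running $\pi^k$ in $T^k$ from the initial distribution. The branched estimate $\hat V^{\text{branch}}(\pi^k)$ is obtained differently. Before the branch point, states are drawn from the data distribution generated by $\pi_\mathcal{D}^k$ under the true dynamics $T^k$. After the branch point, $\pi^k$ is run for $n$ chunks in $\hat T^k$. We therefore view $\hat V^{\text{branch}}$ as the value of a single composite process. That process uses pair $(\pi_\mathcal{D}^k, T^k)$ in the ``pre'' phase and pair $(\pi^k,\hat T^k)$ in the ``post'' phase. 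We compare it to the process that uses $(\pi^k,T^k)$ throughout.

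The key step is to instantiate the branched lemma correctly. Take $\pi_1^k=\pi^k$ with $T_1^k=T^k$ in both phases. Take $\pi_2^k$ to be the composite policy: it equals $\pi_\mathcal{D}^k$ pre-branch and $\pi^k$ post-branch. Its dynamics are $T^k$ pre-branch and $\hat T^k$ post-branch. The four error parameters then read off as follows.
\begin{itemize}
\item $\epsilon_\pi^{k,\text{post}}=0$, because the same policy acts in the rollout.
\item $\epsilon_m^{k,\text{post}}\le\epsilon_m^{k,n}$, because the rollout state distribution is exactly $d^{\pi^k}_{t,s_0\sim\mathcal{D}}$ for $t\le n$.
\item $\epsilon_m^{k,\text{pre}}=0$, because both processes use the true dynamics pre-branch.
\item $\epsilon_\pi^{k,\text{pre}}\le\epsilon_\pi^k$.
\end{itemize}
Substituting into the lemma gives
\[
\frac{2r_{\max}}{1-\gamma}\Big[n\epsilon_m^{k,n}+\frac{(\gamma^k)^{n+1}}{1-\gamma^k}\epsilon_\pi^k+(\gamma^k)^n\epsilon_\pi^k\Big],
\]
which is exactly the claimed bound. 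This route avoids the separate $L_1$ term of the previous theorem, which would otherwise contribute an extra $\epsilon_\pi^k$.

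The main obstacle is justifying this identification. The branched lemma's proof indexes time from the start of the trajectory and treats the first $n$ steps as ``post'' and later steps as ``pre''. This is the MBPO-style accounting: the ordering is reversed relative to the physical branching, so that the discounted weight $(\gamma^k)^{n}$ falls on the data-policy mismatch. I would first argue this reindexing explicitly. Following Janner et al., we reindex so that the $n$-step model rollout occupies the first $n$ macro-steps of the comparison. The residual tail, with weight $(\gamma^k)^{n}$ and beyond, is then attributed to the discrepancy between $\pi_\mathcal{D}^k$ and $\pi^k$. I would then check that the maxima over $t$ in the lemma's hypotheses are taken over the right distributions. For the post phase, the expectation is under $d_t^{\pi_1^k}$ restricted to $t\le n$ with $s_0\sim\mathcal{D}$, which matches the definition of $\epsilon_m^{k,n}$. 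If this reindexing turns out to be only heuristic, the fallback is to apply Lemma~\ref{lem:joint_tv} and Lemma~\ref{lem:roll_tv} directly to the per-time marginals. We split the occupancy sum at $t=n$ and bound the pre-branch marginal gap by at most one $\epsilon_\pi^k$ per macro-step beyond $n$. This reproduces the same geometric-tail calculation as in the lemma.
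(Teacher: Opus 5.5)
Your error assignments ($\epsilon_\pi^{k,\text{pre}}\le\epsilon_\pi^k$, $\epsilon_m^{k,\text{post}}\le\epsilon_m^{k,n}$, the other two zero) and the resulting bound match the paper's, but the paper does not make one combined comparison. It inserts the hybrid value $\hat V^{\text{branch}}(\pi_\mathcal{D}^k,\pi^k)$: the behavior policy in the true dynamics up to the branch point, then $\pi^k$ in the true dynamics for $n$ chunks. It then applies the branched-rollout lemma twice. The first application is to $L_1=|V(\pi^k)-\hat V^{\text{branch}}(\pi_\mathcal{D}^k,\pi^k)|$, with only the pre-branch policy error nonzero. The second is to $L_2$, the gap between this hybrid and the model rollout, with only the post-branch model error nonzero. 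The lemma's bound is linear in its four error parameters, so your one-shot substitution gives the same arithmetic. The paper does not double count $\epsilon_\pi^k$ either. What prevents that is choosing the hybrid as the intermediate instead of $V(\pi_\mathcal{D}^k)$ (as in the previous theorem), not dropping the triangle inequality.

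The split does buy something, and it is exactly where your instantiation is loose. The lemma measures $\epsilon_m^{k,\text{post}}$ under $d_t^{\pi_1^k}$, the marginals of process~1. It handles the $n$-chunk segment with Lemma~\ref{lem:roll_tv}, which requires both processes to enter that segment with the same state distribution. In $L_2$ the two processes coincide before the branch, so both segments start from $\mathcal{D}$. That comparison is meaningful in the physical time order, and the relevant measure is the one defining $\epsilon_m^{k,n}$.

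In your comparison, $\pi_1^k=\pi^k$ runs in $T^k$ throughout. Process~1 therefore reaches the branch with $\pi^k$'s own on-policy distribution, not $\mathcal{D}$, and its marginals are not $d^{\pi^k}_{t,s_0\sim\mathcal{D}}$. Your justification of $\epsilon_m^{k,\text{post}}\le\epsilon_m^{k,n}$ describes the composite process's rollout rather than process~1's. The paper's decomposition is the fix: it isolates the model error in a comparison that needs no reindexing, and confines the reindexing heuristic to $L_1$, where only policy error appears.

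That heuristic is the real weak point, and it is inherited from the lemma; the paper simply invokes the lemma without addressing it. Your fallback would not address it either. Charging $\epsilon_\pi^k$ only to macro-steps beyond $n$ is the same reversed ordering. In the physical order the policy mismatch accrues before the branch, so it is not discounted by $(\gamma^k)^n$.
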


\begin{proof}
We define $\hat{V}(\pi_\mathcal{D}^k,\pi^k)^\text{branch}$ as the value of which executes the $\pi_\mathcal{D}^k$ under the true dynamics until the branch point, then executes $\pi^k$ under the true dynamics for $n$-chunks. Then we decompose the value gap as follows:
\begin{equation}
    \left| V(\pi^k) - \hat{V}^\text{branch}(\pi^k) \right|\\
    \leq  \underbrace{\left| V(\pi^k) - \hat{V}^\text{branch}(\pi_\mathcal{D}^k,\pi^k) \right|}_{L_1} + \underbrace{\left| \hat{V}^\text{branch}(\pi_\mathcal{D}^k,\pi^k) - V^\text{branch}(\pi^k) \right|}_{L_2}
\end{equation}
\end{proof}
For $L_1$, we notice that this term only suffers from error before the branch begins, and we use Lemma~\ref{lem:branched_rollout} with $\epsilon_{\pi}^{k,\text{pre}}\leq \epsilon_\pi^k$ and set all other errors to 0, then we have
\begin{equation}
    L_1 \leq \frac{2r_{\max}}{1-\gamma}\left[\frac{(\gamma^k)^{n+1}}{1-\gamma^k}\epsilon_\pi^k+(\gamma^k)^n \epsilon_\pi^k\right]
\end{equation}

For $L_2$, it incorporates model error under the policy $\pi^k$ incurred after the branch. Again we use Lemma~\ref{lem:branched_rollout} with $\epsilon_{m^\prime}^{k}$ set all other errors to 0, we obtain
\begin{equation}
    L_2 \leq \frac{2r_{\max}}{1-\gamma}n\epsilon_{m}^{k,n}
\end{equation}

Adding $L_1$ and $L_2$, we have
\begin{equation}
    \left| V(\pi^k) - \hat{V}^\text{branch}(\pi^k) \right|\leq\frac{2r_{\max}}{1-\gamma}\left[\frac{(\gamma^k)^{n+1}}{1-\gamma^k}\epsilon_\pi^k+(\gamma^k)^n \epsilon_\pi^k+n\epsilon_{m}^{k,n}\right]
\end{equation}

\section{Real-World Experiments}
\label{app:real_world}
\subsection{Hardware Setup} 
We evaluate our method on two distinct real-world robotic platforms: the Arx-X5 bimanual robot (left) and the Galaxy-R1 whole-body robot (right) as shown in figure \ref{fig:robots}.

\begin{figure}[!htbp]
  \centering
  \includegraphics[width=0.99\linewidth]{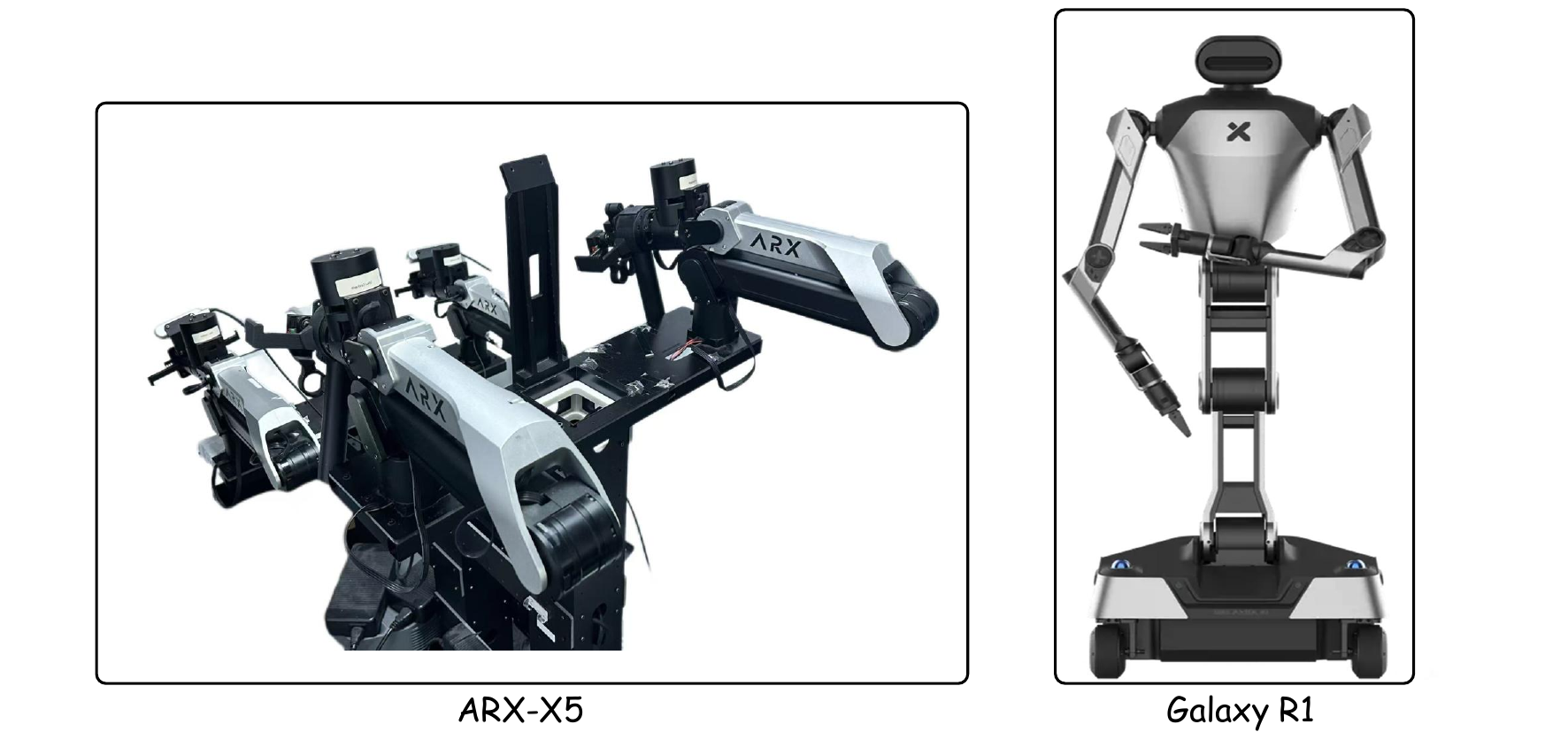}
  \caption{An illustration of real-world robots.}
  \label{fig:robots}
\end{figure}

\paragraph{Galaxy-R1 System} 
The Galaxy-R1 is a high-dimensional whole-body robot featuring a 21-DoF kinematic structure, comprising dual 7-DoF arms, a 4-DoF torso, and a 3-DoF mobile base. It is equipped with a ZED 2i stereo camera (head) and two Intel RealSense D435i cameras (wrists), utilizing Galaxea G1 parallel grippers for manipulation. Data collection is performed via the JoyLo teleoperation system, which integrates 3D-printable links and Dynamixel actuators with Joy-Con controllers; the control loop operates at 100 Hz, with data synchronized and recorded at 10 Hz.

\paragraph{Arx-X5 System} 
The Arx-X5 is a bimanual platform with a dual-arm configuration totaling 14 DoF. Its perception suite consists of three Intel RealSense D435i RGB-D cameras providing multi-view feedback. For data collection, we employ a master-slave teleoperation setup, where two handheld master arms directly control the robot's follower arms. The system operates at a control frequency of 15 Hz, which is sufficient for the quasi-static manipulation tasks utilized in our experiments.

\paragraph{Real-world Assets.}
We present the comprehensive set of physical assets employed in our experiments in Figure~\ref{fig:assets}. The left panel displays the seen objects that were accessible to the robot during the training phase, while the right panel introduces the unseen objects. 

\begin{figure}[!htbp]
  \centering
  \includegraphics[width=0.99\linewidth]{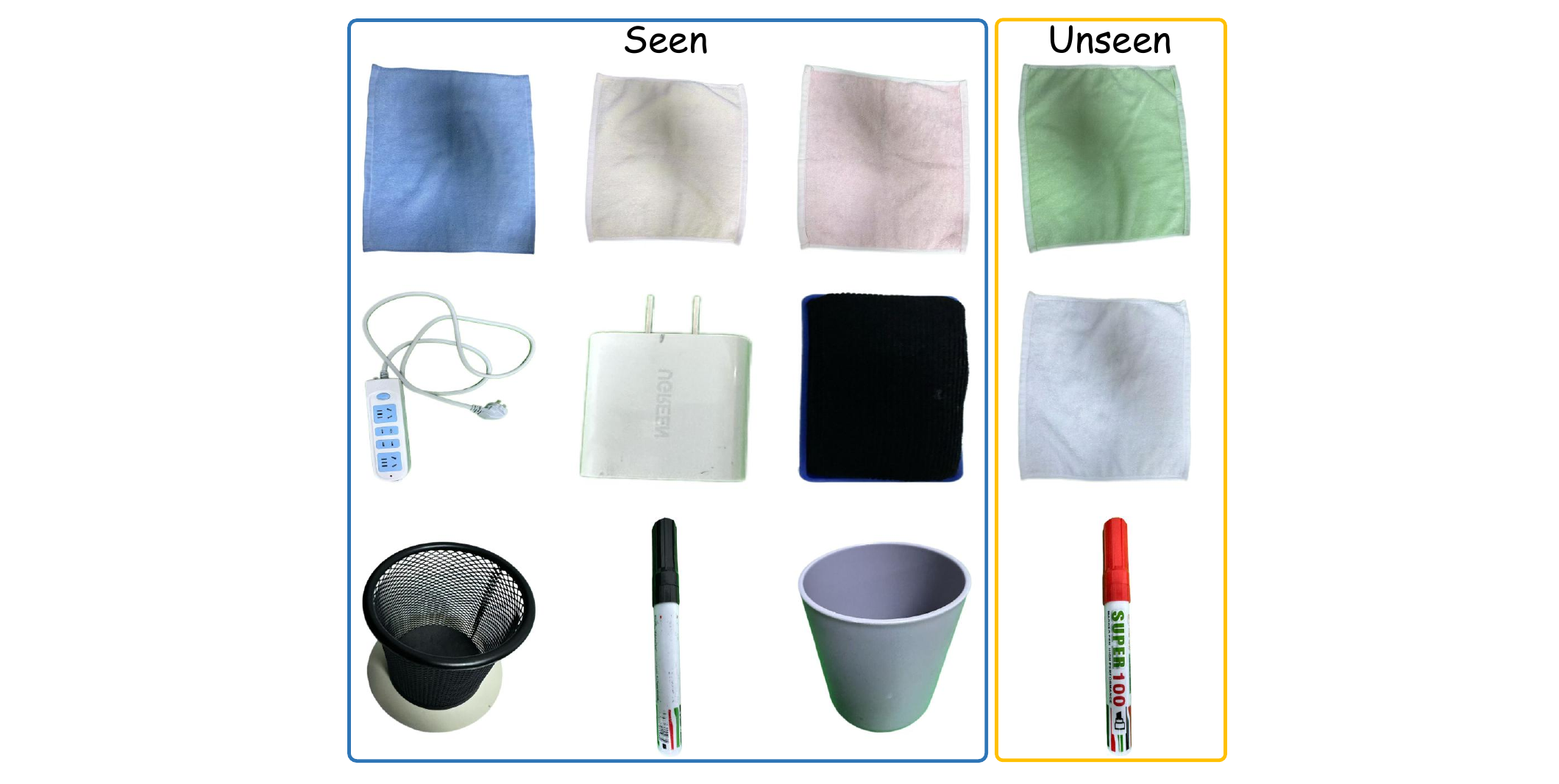}
  \caption{An illustration of real world assets.}
  \label{fig:assets}
\end{figure}

\paragraph{Task Illustration.}
As illustrated in Figure \ref{fig:tasks}, our evaluation suite encompasses five distinct tasks designed to test precision, deformable object manipulation, and whole-body coordination: (a) \textbf{Plug In}: A high-precision task requiring the robot to align and insert a plug into a power strip socket. (b) \textbf{Fold Towel}: A long-horizon deformable object task. Conditioned on a language prompt, the robot selects the correct towel from two options, drags it to the workspace center, performs an initial bimanual fold by grasping the top corners, executes a second fold by coordinating a left-hand hold with a right-hand grasp, and finally positions the folded towel. (c) \textbf{Insert Pen}: Requires fine motor skills to pick up a pen from the table and insert it into a narrow pen holder. (d) \textbf{Pick Cup}: A pick-and-place task where the robot grasps a cup and precisely deposits it onto a target plate. (e) \textbf{Wipe Board}: A whole-body mobile manipulation task requiring the robot to execute a 90$^\circ$ base rotation, navigate towards a whiteboard, and raise its end-effector to erase specific visual targets.
\begin{figure*}[!htbp]
  \centering
  \includegraphics[width=0.99\linewidth]{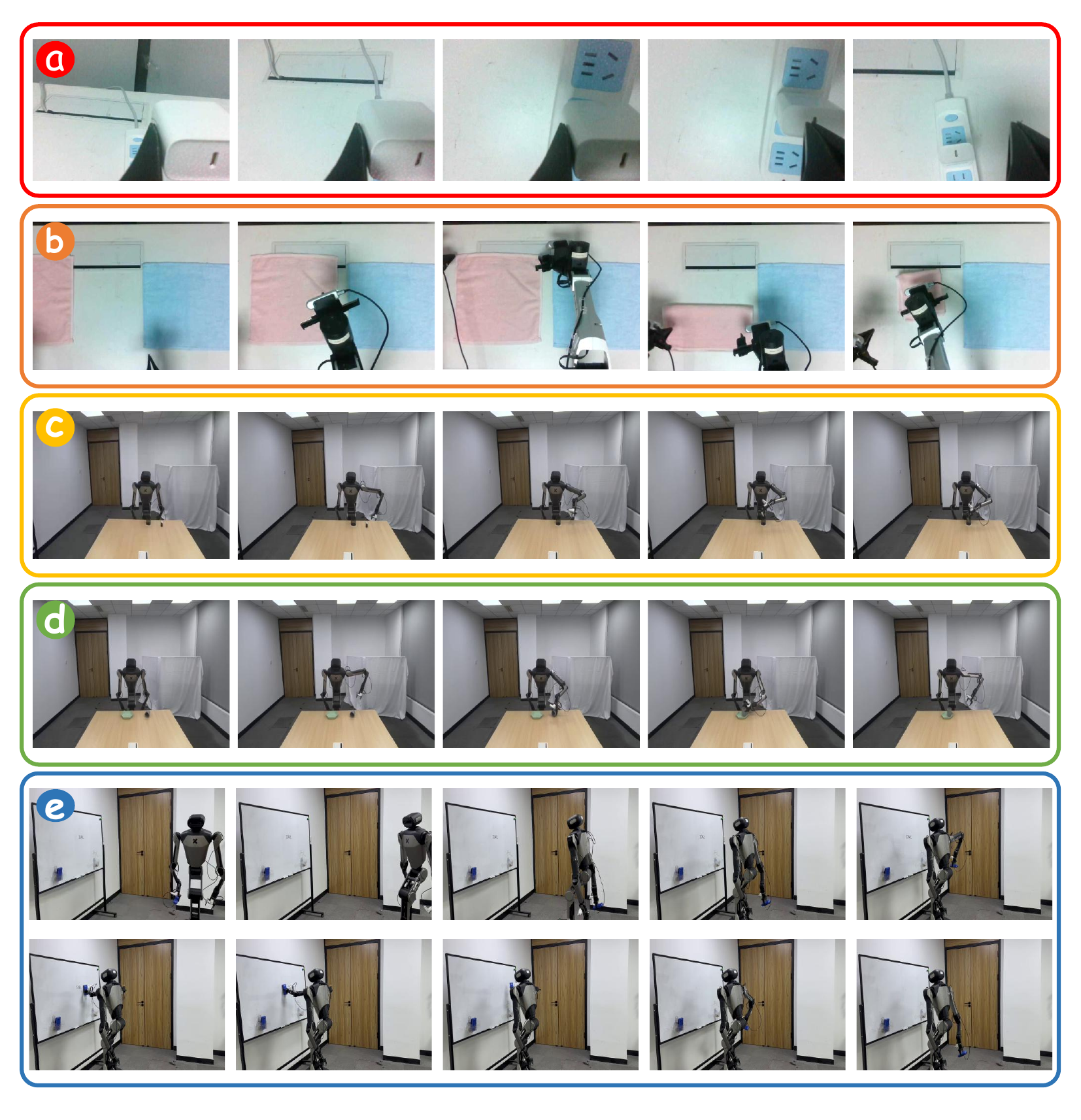}
  \caption{An illustration of real world tasks. We test VLA-MBPO on bimanual and whole-body robots.}
  \label{fig:tasks}
  \vspace{-4mm}
\end{figure*}

\paragraph{Data Collection and Policy Evaluation} For all tasks, our data collection strategy involves an initial phase of human-teleoperated expert demonstrations for Supervised Fine-Tuning (SFT), followed by a mix of expert and autonomous self-collected data for Reinforcement Learning (RL). Evaluation of each tasks comprises 50 trials divided into 30 seen and 20 unseen socket configurations The specific protocols for each task are as follows:

\begin{itemize}
\item \textbf{Plug In:} The SFT dataset consists of 50 expert trajectories. For RL, we augment this with 50 self-collected on-policy trajectories. The test scenarios are specifically designed to probe the policy's spatial generalization capabilities in high-precision alignment scenarios.
\item \textbf{Fold Towel:} We collect 60 expert trajectories for SFT, covering 3 distinct towel colors (20 trajectories each). The RL dataset combines these expert data with 50 self-collected trajectories. The evaluation protocol tests robustness against background shifts, instruction variations, and novel towel instances, assessing the model's fundamental understanding of deformable object dynamics.
\item \textbf{Insert Pen:} The SFT phase utilizes 100 expert trajectories, while the RL phase adds 50 self-collected trajectories. We conduct evaluation focusing on high-dimensional spatial understanding, specifically examining the policy's ability to handle object generalization and severe viewpoint disturbances during fine manipulation.
\item \textbf{Pick Cup:} Similar to the pen task, we use 100 expert trajectories for SFT and add 50 self-collected trajectories for RL. The 50 evaluation trials are designed to test visual generalization under active camera motion, specifically assessing stability when the torso and mobile base introduce significant viewpoint perturbations.
\item \textbf{Wipe Board:} We employ 100 expert trajectories for SFT and supplement this with 50 self-collected trajectories for RL. The evaluation targets robustness under partial observability and high compounding errors arising from navigation. It further tests generalization across stain variations, including color, position, and quantity.
\end{itemize}

\section{Implementation Details}
\paragraph{Causal Attention Mask Implementation}
Figure \ref{fig:interleaved} illustrates the design of our unified causal attention mask, which governs the information flow between multi-modal tokens during training. The sequence integrates visual observations, action sequences and text instructions. The token sequence consists of ViT and VAE features. ViT represents high-level semantic features extracted by the vision encoder to provide semantic grounding, while VAE denotes the compressed latent codes used for high-fidelity image reconstruction. The sequence is further structured by control tokens such as "Action Chunk" (representing the robot's discrete control signals) and task-specific prompts. We present
\label{app:impl}
\begin{figure*}[htbp]
  \centering
  \includegraphics[width=1.0\linewidth]{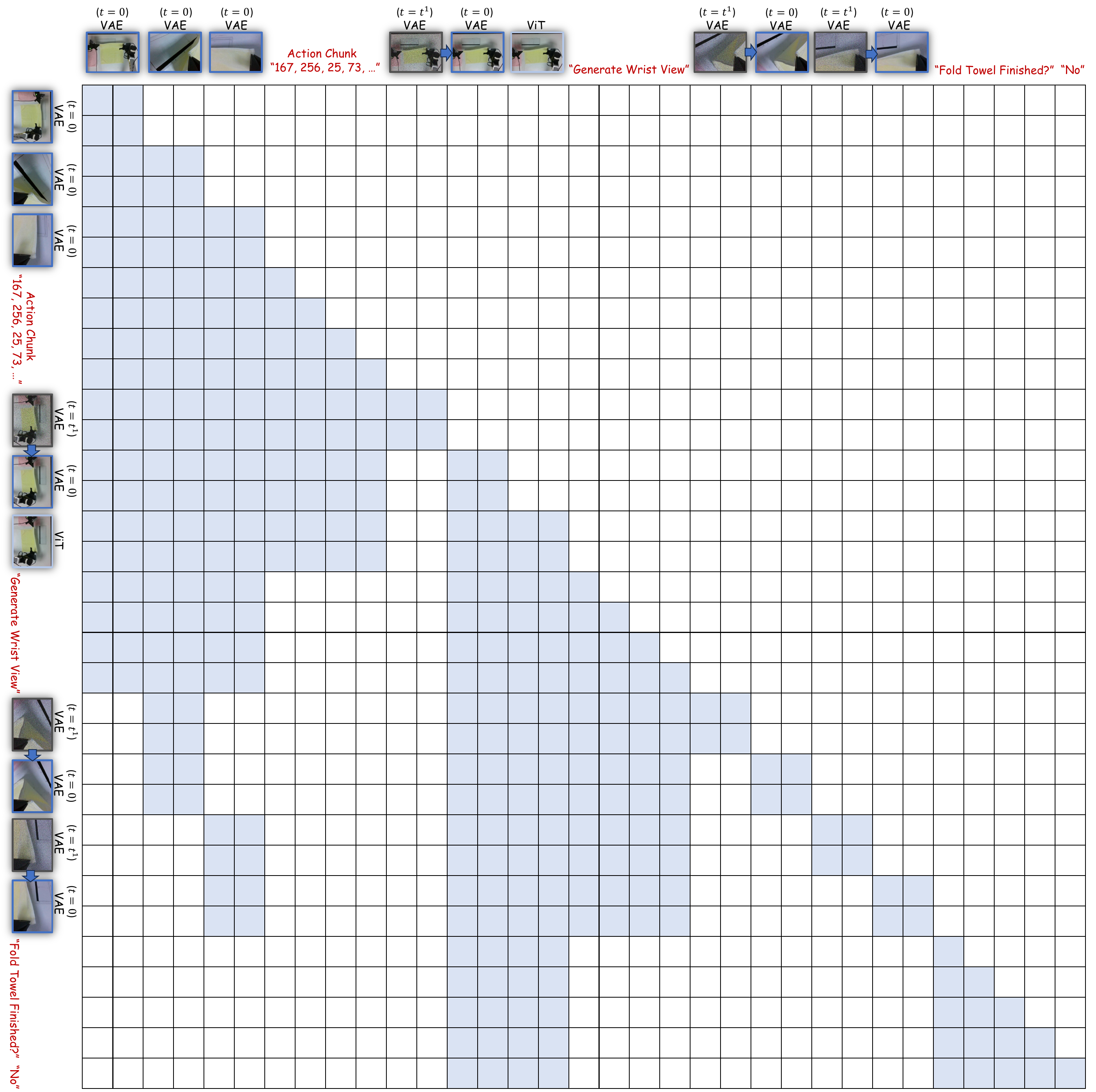}
  \caption{\textbf{Causal Mask in UMM World Model during training}.  VAE and ViT denote VAE features and ViT features, respectively. $t$ is the noise timestep and $t=0$ means no noise.}
  \label{fig:interleaved}
\end{figure*}
\paragraph{Prompt Engineering} To align the model's generation with specific task requirements, we utilize structured text prompts for both world model and reward model prediction. The exact prompt templates are detailed below.
\begin{itemize}
    \item \textbf{World Model Prompt.} We instruct the model to act as a physical simulator. The input integrates the history of observations with a discrete action sequence, where the action sequence is normalized into integer arrays representing joint positions and gripper states for each timestep (e.g., \texttt{[111, 113, ...]}). \begin{quote} \small \textit{``You are now acting as a \textbf{world model} that simulates robot manipulation task execution. Your task is to predict the \textbf{next frame of visual observation}, given the following inputs: (1) Multiple current observation images from the robot's cameras (head and wrist); (2) An action sequence describing the manipulation to execute; (3) Optionally, the next frame from the head camera (for predicting wrist camera views). You will receive images from different camera viewpoints and need to predict the next frame according to the provided action sequence and instruction.''} \end{quote} Following this system instruction, the model receives the action chunk and the specific generation command, for example: \texttt{"... Step 9: [124, 34, 127, 133, 241, 129, 0]. Predict next head camera view according to the current observation and action."}
    \item \textbf{Reward Model Prompt.} For the reward model, we leverage a pre-trained VLM as a success detector. We enforce a strict binary output format (Yes or No). \begin{quote} \small \textit{``You are a vision-language model with advanced reasoning abilities. Your task is to carefully observe the image and determine whether the task is successfully completed. \newline \textbf{Environment description:} You are observing a robot workspace from the LIBERO dataset. The robot can manipulate objects (pick, place, arrange) in the scene. \newline \textbf{Guidelines:} Carefully examine the state of objects. Check if the goal state matches the task description. Consider spatial arrangement. \newline \textbf{Response format:} Answer with "Yes." if the task is successfully completed. Answer with "No." if the task is not yet completed or failed. Your response must be either "Yes." or "No." without additional explanation. \newline \textbf{Task:} Determine whether the task: [Task Description] is successfully completed.''} \end{quote}
\end{itemize}

\paragraph{Hyperparameters.}
We present the detailed training hyperparameters for the UMM-World and the downstream policy optimization in Table \ref{tab:umm_hyper} and Table \ref{tab:rl_hyper}. Respectively. A key advantage of our framework is its stability and ease of tuning; as evidenced in Table \ref{tab:rl_hyper}, the majority of hyperparameters remain constant across all task suites. The primary adaptation required is scaling the Sample Size (and the corresponding Update to data steps) for long horizon tasks. 

\begin{table}[htbp]
\centering
\caption{Hyperparameters for UMM-World.}
\begin{tabular}{l c}
\toprule
\textbf{Parameters} & \textbf{Value} \\
\midrule
lr & $2\mathrm{e}{-5}$ \\
Cross entropy weight & 0.01 \\
MSE weight & 1.0 \\
Max latent size & 64 \\
Timestep shift & 4 \\
Cfg interval & [0.4, 1.0] \\
Text scale & 6 \\
Image scale & 2 \\
\bottomrule
\end{tabular}
\label{tab:umm_hyper}
\end{table}

\begin{table}[htbp]
\centering
\caption{Hyperparameters for RL.}
\begin{tabular}{lccccc}
\toprule
\multirow{2}{*}{\textbf{Parameters}} & \multicolumn{4}{c}{\textbf{Libero}} & \multirow{2}{*}{\textbf{Real-World}} \\
\cmidrule(lr){2-5} 
& \textbf{Spatial} & \textbf{Object} & \textbf{Goal} & \textbf{Long} &  \\
\midrule
Sample size & 512 & 512 & 512 & 1280 & 512 \\
Batch size & 512 & 512 & 512 & 512 & 512 \\
Rollout chunk & 2 & 2 & 2 & 2 & 2 \\
Actor lr & $5\mathrm{e}{-6}$ & $5\mathrm{e}{-6}$ & $5\mathrm{e}{-6}$ & $5\mathrm{e}{-6}$ & $5\mathrm{e}{-6}$ \\
Critic lr & $1\mathrm{e}{-4}$ & $1\mathrm{e}{-4}$ & $1\mathrm{e}{-4}$ & $1\mathrm{e}{-4}$ & $1\mathrm{e}{-4}$ \\
Reward discount rate $\gamma$ & 0.99 & 0.99 & 0.99 & 0.99 & 0.99 \\
GAE $\lambda$ & 0.95 & 0.95 & 0.95 & 0.95 & 0.95 \\
Clip ratio $\epsilon$ & 0.1 & 0.1 & 0.1 & 0.1 & 0.1 \\
Action chunk $H$ & 10 & 10 & 10 & 10 & 10 \\
Denoise steps & 3 & 3 & 3 & 3 & 3 \\
Noise level & 0.5 & 0.5 & 0.5 & 0.5 & 0.5 \\
Update to data & 20 & 20 & 20 & 50 & 20 \\
\bottomrule
\end{tabular}
\label{tab:rl_hyper}
\end{table}

\section{Additional Results}
\subsection{World Model Generation Visualization} 
\begin{figure}[!htbp]
  \centering
  \includegraphics[width=0.98\linewidth]{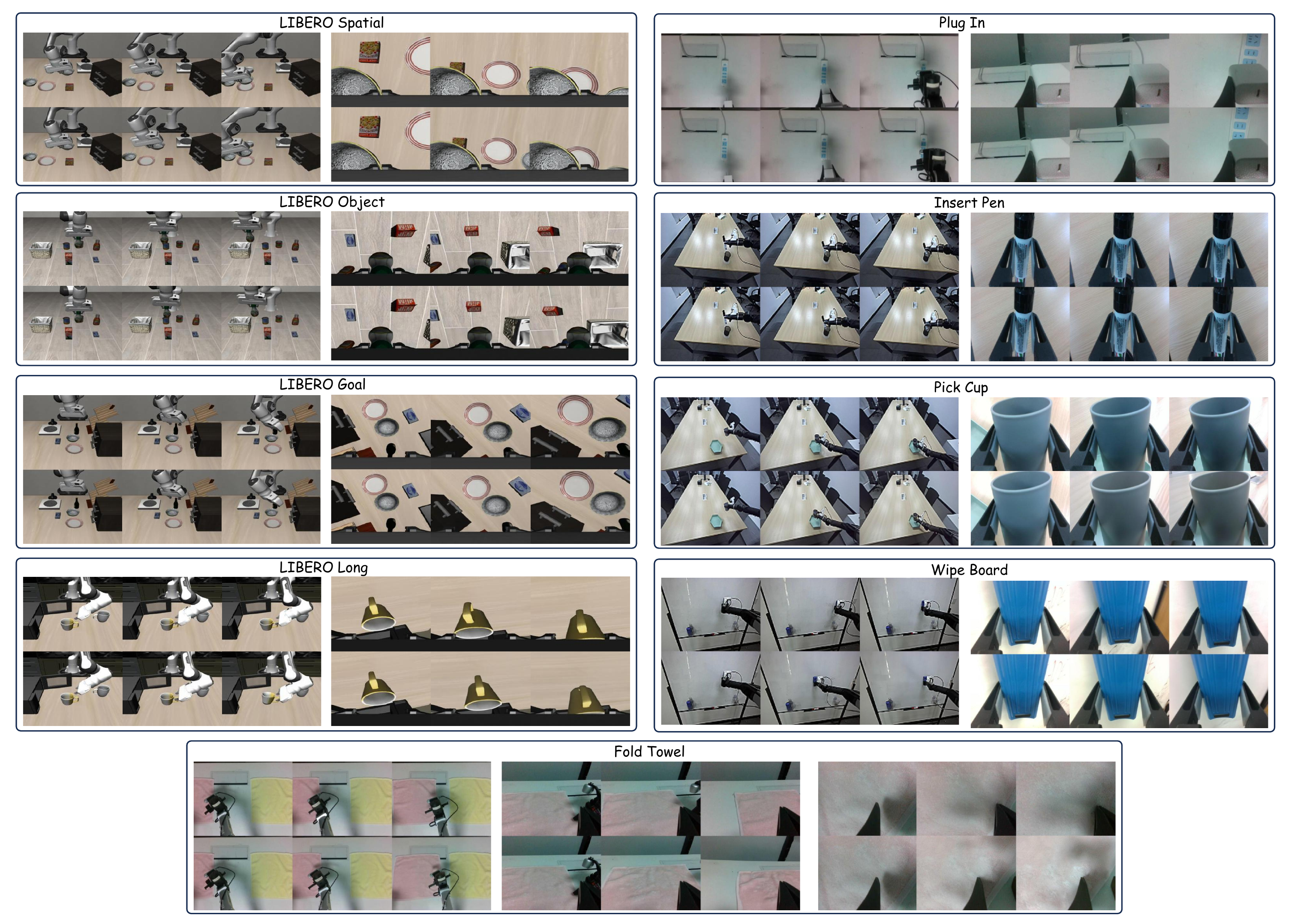}
  \caption{Qualitative comparison of UMM-World prediction against Ground Truth across both simulated (LIBERO) and real-world tasks.}
  \label{fig:world_model_all_tasks}
\end{figure}
\paragraph{Visualization Analysis} Figure \ref{fig:world_model_all_tasks} provides a qualitative comparison between the generated trajectories of UMM-World and the ground truth. The visualization is organized with the initial observation in the leftmost column, followed by the predicted sequence spanning 20 steps for simulated tasks and 40 steps for real-world scenarios.
\begin{figure}[!htbp]
  \centering
  \includegraphics[width=0.90\linewidth]{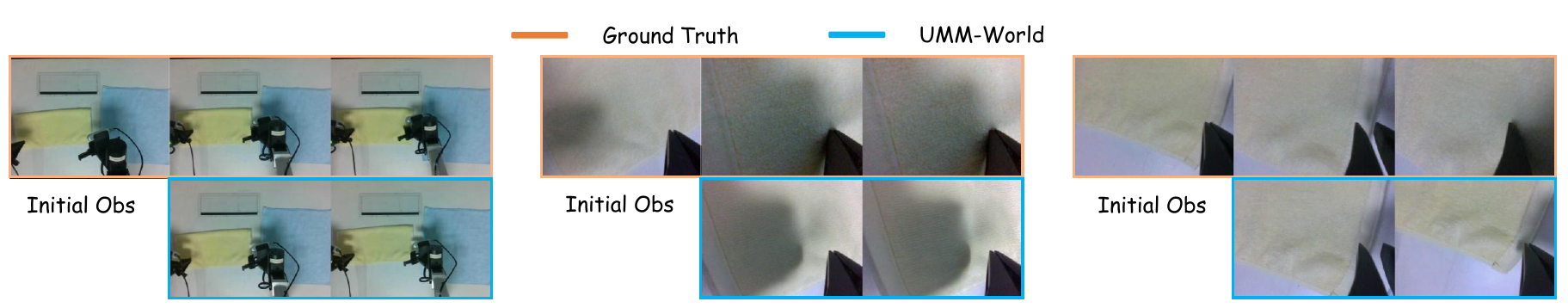}
  \caption{An illustration showing spatial consistency of UMM-World generation.}
  \label{fig:fold_towel}
\end{figure}
\paragraph{Case Study}
Figure \ref{fig:fold_towel} illustrates UMM-World's internal coherence through two distinct qualitative examples spanning two consecutive 20-step chunks. In the left deformable object included task, the model exhibits strong geometric consistency. Although the predicted head view diverges from the ground truth regarding the right hand's position, the generated wrist view aligns with the predicted head view rather than the ground truth. This confirms that our interleaved decoding strategy enforces strict spatial coherence, ensuring the body posture remains consistent across viewpoints. The right example highlights the model's adherence to physical laws. While the ground truth shows the robot erasing the entire word in the first chunk, the model predicts a variation where the letter E' remains un-erased. Crucially, both the generated head and wrist views consistently depict this incomplete' state. This demonstrates that the model maintains a unified physical reality—correctly reflecting the remaining ink across all sensors—rather than simply memorizing the expert's outcome.
% \subsection{Value Model Visualization}
% \begin{figure}[!htbp]
%   \centering
%   \includegraphics[width=0.90\linewidth]{figures/single_task_value.png}
%   \caption{An illustration of value model prediction on a challenging task of \textit{Libero-Long}.}
%   \label{fig:single_task_value}
% \end{figure}
% Figure \ref{fig:single_task_value} illustrates the learned value function on a challenging long-horizon task from the \textit{LIBERO-Long} suite. The visualization highlights two critical capabilities of VLA-MBPO. First, the model successfully performs trajectory stitching, effectively aggregating distinct segments of experience into a cohesive progress signal via chunk level branched rollout. Second, the value curve exhibits sharp sensitivity to key frames; we observe distinct "value jumps" precisely at moments corresponding to the completion of critical sub-goals. This indicates that the value model has not only learned to recognize intermediate success states but can also provide a dense, structured guide for the policy to navigate complex, multi-stage manipulations.

\subsection{Failure Case Analysis}
\begin{enumerate}
    \item \textbf{Partially Observable.}
    As illustrated in Figure \ref{fig:single_task_value}, we analyze UMM-World's failure cases from inherent partial observability in embodied control. The left panel demonstrates a failure during the arm lifting phase in the \textit{Wipe Board} task. As the manipulator moves out of the head camera's field of view, the model fails to render the arm in the predicted head view. Although the wrist camera provides local visual feedback, the model struggles to infer the global arm posture solely from these local cues without explicit kinematic history in the main view, leading to a disappearance of the robot arm in the generated future.The right panel depicts a failure during the navigation phase involving significant viewpoint changes. When the robot rotates its base to face the whiteboard, the target object enters the field of view for the first time. Since this region was never present in the condition observations, the world model cannot hallucinate the correct geometric structure and texture of the unseen environment from nothing.
    \begin{figure}[!htbp]
      \centering
      \includegraphics[width=0.99\linewidth]{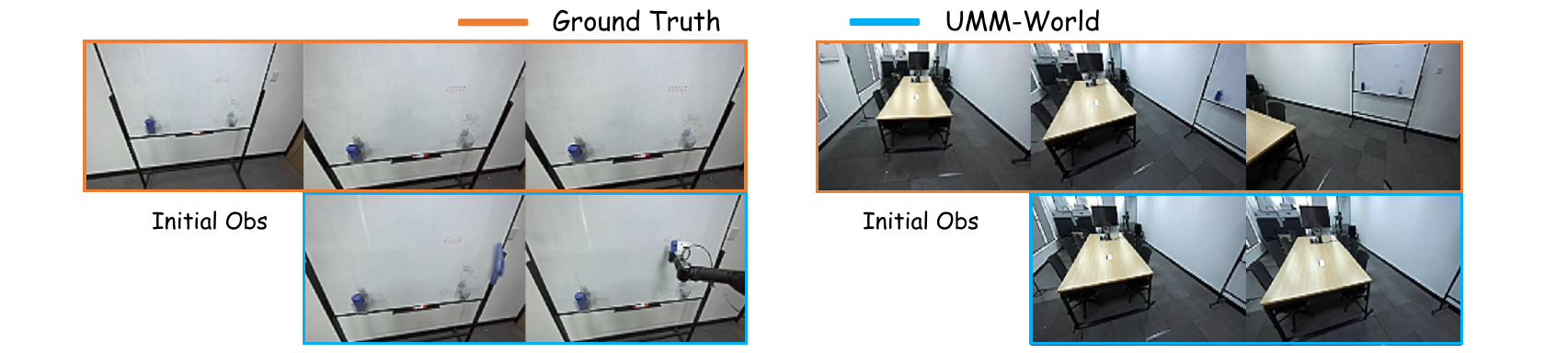}
      \caption{An illustration of failure case caused of partial observability.}
      \label{fig:single_task_value}
    \end{figure}    
    \item \textbf{Large Physical Movements.} As shown in Figure \ref{fig:chunk_size_fail}, we observe generation failures when the target sequence involves excessive physical movement or state changes. The left panel illustrates a \textit{Wipe Board} sequence characterized by significant end-effector displacement. Due to the difficulty of modeling such large-scale motion in a single shot, the model suffers from motion collapse; instead of capturing the extensive trajectory, it conservatively predicts minimal change, resulting in a generated arm that appears nearly static despite the actual dynamic motion. Similarly, in the \textit{Fold Towel} task (right panel), the sequence encompasses drastic configuration changes inherent to interacting with deformable objects. The complexity and stochasticity of these large-scale non-rigid transitions prove difficult to bridge accurately. Consequently, the model succumbs to hallucination, generating physically implausible towel configurations rather than correctly tracking the substantial deformation process.
    \begin{figure}[!htbp]
      \centering
      \includegraphics[width=0.99\linewidth]{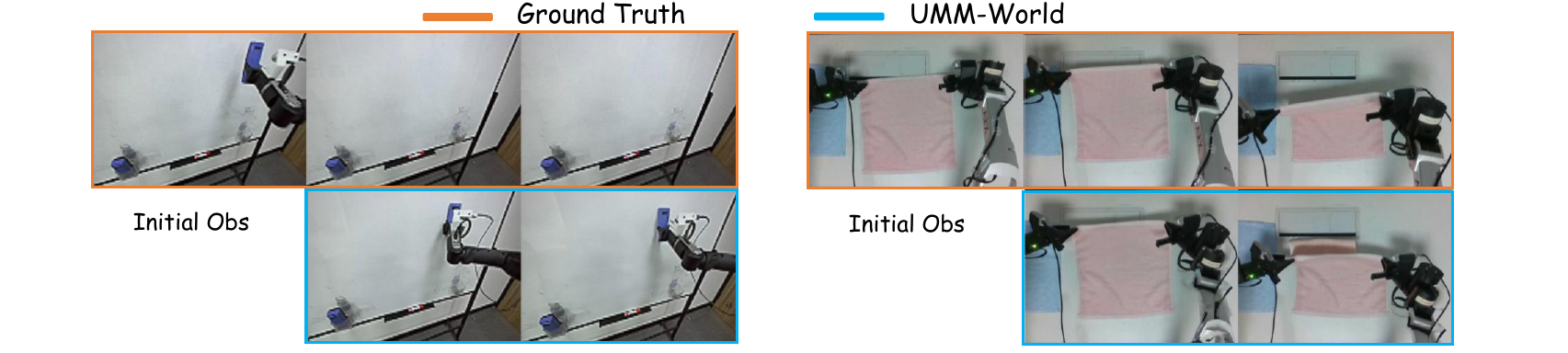}
      \caption{An illustration of failure cases caused by large prediction horizon.}
      \label{fig:chunk_size_fail}
    \end{figure}
\end{enumerate}

\section{Computational Resources}
\label{app:compute}
All experiments were conducted on 8 NVIDIA H100 GPUs. The computational cost for UMM-World training on the collected dataset requires approximately 7-8 hours. The computational cost for policy optimization typically completes in approximately 4-6 hours under the standard interaction budget.

%%%%%%%%%%%%%%%%%%%%%%%%%%%%%%%%%%%%%%%%%%%%%%%%%%%%%%%%%%%%%%%%%%%%%%%%%%%%%%%
%%%%%%%%%%%%%%%%%%%%%%%%%%%%%%%%%%%%%%%%%%%%%%%%%%%%%%%%%%%%%%%%%%%%%%%%%%%%%%%

\end{document}